\newtheorem{theorem}{Theorem}
\newtheorem{corollary}{Corollary}[theorem]
\newtheorem{definition}{Definition}
\newtheorem{problem}{Problem}
\newtheorem{example}{Example}
\newcommand{\norm}[1]{\ensuremath{\left\| #1\right\|}}
\newcommand{\pbra}[1]{\ensuremath{\left( #1\right)}}
\newcommand{\sbra}[1]{\ensuremath{\left[ #1\right]}}
\newcommand{\cbra}[1]{\ensuremath{\left\{ #1\right\}}}
\newcommand{\abra}[1]{\ensuremath{\left< #1\right>}}
\newcommand{\pder}[2]{\ensuremath{\frac{\partial #1}{\partial #2}}}
\newcommand{\E}[1]{\ensuremath{\mathbb{E}\left[ #1\right]}}
\newcommand{\Eh}[1]{\ensuremath{\mathbb{\hat E}\left[ #1\right]}}
\DeclareMathOperator*{\argmax}{arg\,max}
\DeclareMathOperator*{\argmin}{arg\,min}
\begin{document}

\title{Online Deterministic Annealing  for Classification and Clustering}

\author{Christos N. Mavridis, \IEEEmembership{Member, IEEE}, and 
John S. Baras, \IEEEmembership{Life Fellow, IEEE}
\thanks{%
Manuscript published in the IEEE Transactions on Neural Networks and Learning Systems (TNNLS).} 
%
%
\thanks{
Research partially supported by the 
Defense Advanced Research Projects
Agency (DARPA) under Agreement No. HR00111990027, 
by the Office of Naval Research (ONR) grant N00014-17-1-2622, 
and by a grant from Northrop Grumman Corporation.%
}%
\thanks{The authors are with the 
Department of Electrical and Computer Engineering and 
the Institute for Systems Research, 
University of Maryland, College Park, USA.
{\tt\small emails:\{mavridis, baras\}@umd.edu}.}%
}

\maketitle
 \thispagestyle{empty}
\pagestyle{empty}

\begin{abstract}
Inherent in virtually every
iterative machine learning algorithm is the problem 
of hyper-parameter tuning which includes three major design parameters: 
(a) the complexity of the model, e.g., the number of neurons in a neural network, 
(b) the initial conditions, which heavily affect the 
behavior of the algorithm, and
(c) the dissimilarity measure used to quantify its performance.
%
We introduce an online prototype-based learning algorithm 
that can be viewed as a
progressively growing competitive-learning neural network architecture 
for classification and clustering.
The learning rule of the proposed approach 
is formulated as an online gradient-free stochastic approximation algorithm
that solves a sequence of appropriately defined optimization problems,
simulating an annealing process.
The annealing nature of the algorithm contributes to
avoiding poor local minima, 
offers robustness with respect to the initial conditions,
and provides a means 
to progressively increase the complexity of the learning model, 
through an intuitive bifurcation phenomenon.
The proposed approach 
is interpretable, requires minimal 
hyper-parameter tuning, and 
allows online control over the performance-complexity trade-off.
Finally, we show that Bregman divergences 
appear naturally as a family of dissimilarity measures 
that play a central role in both 
the performance and the computational complexity
of the learning algorithm.
%
%
\end{abstract}

\begin{IEEEkeywords}
Machine learning algorithms, progressive learning, annealing optimization, 
classification, clustering, Bregman divergences.
\end{IEEEkeywords}

\section{Introduction}
\label{Sec:Introduction}

\IEEEPARstart{L}{earning} 
from data samples has become an important component of 
artificial intelligence. 
While virtually all learning problems can be formulated as 
constrained stochastic optimization problems, 
the optimization methods can be intractable, typically 
dealing with mixed constraints and very large, or even infinite-dimensional spaces \cite{bennett2006interplay}.
For this reason, feature extraction,  
model selection and design, and analysis of optimization methods, 
have been the cornerstone of machine learning algorithms 
from their genesis until today.

Deep learning methods,
currently dominating the field of machine learning
due to their performance in multiple applications, 
attempt to learn feature representations from data, 
using biologically-inspired models in artificial neural networks 
\cite{krizhevsky2012imagenet,lee2009convolutional}. 
However, they typically use overly complex models of a great many parameters,
which comes in the expense of time, energy, data, memory, 
and computational resources \cite{thompson2020computational,strubell2019energy}. 
Moreover, they are, by design, hard to interpret and vulnerable to small perturbations and adversarial attacks
\cite{szegedy2013intriguing,carlini2017towards}.   
The latter, has led to an emerging hesitation in 
their implementation outside common benchmark datasets 
\cite{sehwag2019analyzing}, and, especially, in security critical applications.
On the other hand, it is understood that the trade-off between 
model complexity and performance is closely related to 
over-fitting, generalization, and robustness to 
input noise and attacks 
\cite{xu2012robustness}.
In this work, we introduce a learning model 
that progressively adjusts its complexity, 
offering online control over this trade-off.
The need for such approaches is reinforced by recent studies 
revealing that existing flaws in the current 
benchmark datasets may have inflated the need for overly
complex models \cite{northcutt2021pervasive}, and that
over-fitting to adversarial training examples 
may actually hurt generalization \cite{raghunathan2019adversarial}. 

We focus on prototype-based models,  
mainly represented by vector quantization methods, 
\cite{Kohonen1995,devroye2013,mavridis2020convergence}. 
In vector quantization, 
originally introduced as a signal processing method for compression, 
a set of codevectors (or prototypes) $M:=\cbra{\mu_i}$, 
is used to represent the data space in an optimal 
way according to an average distortion measure:
%
\begin{align*}
\min_{M} ~ J(M) := \E{\min_i d(X,\mu_i)},
\end{align*}
%
where the proximity measure $d$ defines the similarity between the 
random input $X$ and a codevector $\mu_i$.
The codevectors can be viewed as a set of neurons,
the weights of which live in the data space itself, 
and constitute the model parameters. 
In this regard, vector quantization algorithms can be viewed as
competitive-learning neural network architectures
with a number of appealing properties:
%
they are consistent, 
data-driven, interpretable, robust, topology-preserving
\cite{martin_topologyPreservationInSOM_2008}, 
sparse in the sense of memory complexity, and fast to train and evaluate.
In addition, 
they have recently 
shown impressive robustness 
against adversarial attacks,
suggesting suitability in security critical 
applications \cite{saralajew2019robustness}, while
their representation of the input 
in terms of memorized exemplars is an intuitive approach 
which parallels similar concepts from cognitive psychology
and neuroscience. 
%
As iterative learning algorithms, however, 
their behavior heavily depends on three major design parameters: 
(a) the number of neurons/prototypes,
which, defines the complexity of the model, 
(b) the initial conditions, 
that affect the transient and steady-state behavior of the algorithm, and
(c) the proximity measure $d$ used to quantify the similarity between
two vectors in the data space. 

Inspired by the deterministic annealing approach
\cite{rose1998deterministic}, 
we propose a learning approach that resembles an annealing process,
tending to avoid poor local minima, 
offering robustness with respect to the initial conditions,
and providing a means to progressively increase the 
complexity of the learning model, 
allowing online control over the performance-complexity trade-off.
We relax the original problem to a soft-clustering problem, 
introducing the association probabilities $p(\mu_i|X)$, and
replacing the cost function $J$ by 
$D(M) := \E{\sum_i p(\mu_i|X) d(X,\mu_i)}$.
This probabilistic framework 
(to be formally defined in Section \ref{Sec:ODA})
allows us to define 
the Shannon entropy $H(M)$ 
that 
characterizes the ``purity'' of the clusters induced by the codevectors.
We then replace the original problem 
by a \emph{sequence of optimization problems}:
%
\begin{align*}
\min_{M} ~ F_T(M) := D(M) - T H(M), 
\end{align*}
%
parameterized by a temperature coefficient $T$, which acts as a
Lagrange multiplier controlling the trade-off between minimizing 
the distortion $D$ and maximizing the entropy $H$.
By successively solving the optimization problems $\min_{M} ~ F_T(M)$ 
for decreasing values of $T$, the model undergoes a series of 
phase transitions that resemble an annealing process.
Because of the nature of the entropy term, 
in high temperatures $T$, the effect of the initial conditions is greatly
mitigated, while, as $T$ decreases, 
the optimal codevectors of the last optimization problem 
are used as initial conditions to the next,
which helps in avoiding poor local minima.
Furthermore, as $T$ decreases, 
the cardinality of the set of codevectors $M$ increases,
according to an intuitive bifurcation phenomenon.

Adopting the above optimization framework, we introduce an 
online training rule based on \emph{stochastic approximation} 
\cite{borkar2009stochastic}.
While stochastic approximation 
offers an online, adaptive, 
and computationally inexpensive optimization algorithm,
it is also strongly connected to dynamical systems.
This enables the study 
of the convergence of the learning algorithm through 
mathematical tools from dynamical systems and control 
\cite{borkar2009stochastic}.
We take advantage of this property to prove the convergence of the 
proposed learning algorithm as a 
consistent density estimator (unsupervised learning),
and a Bayes risk consistent classification rule (supervised learning).
Finally, we show that the proposed 
stochastic approximation learning algorithm
introduces \emph{inherent regularization mechanisms} and 
is also \emph{gradient-free}, provided that the proximity measure 
$d$ belongs to the family of Bregman divergences.
Bregman divergences are information-theoretic dissimilarity measures
that have been shown to play an important role in learning applications
\cite{banerjee2005clustering,villmann_onlineDLVQmath_2010},
including measures such as the widely used Euclidean distance and the 
Kullback-Leibler divergence.
We believe that these results can potentially lead to new developments
in learning with progressively growing models, including,
but not limited to, communication, control, and reinforcement learning applications \cite{mavridis2021progressive,mavridis2020detection,mavridis2021maximum}.

\section{Prototype-based Learning}
\label{Sec:PBLearning}

In this section, the mathematics and notation 
of prototype-based machine learning algorithms, 
which will be used as a base for our analysis,
are briefly introduced. 
For more details see 
\cite{banerjee2005clustering,biehl2016prototype,
villmann_onlineDLVQmath_2010,mavridis2020convergence}.

\subsection{Vector Quantization for Clustering}
\label{ssSec:VQ}

Unsupervised analysis
can provide valuable insights into the nature of the
dataset at hand, and it plays an important role in the
context of visualization.
Central to unsupervised learning
is the representation of 
data in a vector space by typical representatives, 
which is formally defined in the following optimization problem:

\begin{problem}
\label{prb:VQ}
Let $X: \Omega \rightarrow S\subseteq\mathbb{R}^d$ 
be a random variable defined in a 
probability space $\pbra{\Omega, \mathcal{F}, \mathbb{P}}$, 
and $d:S \times ri(S) \rightarrow \left[0,\infty\right)$
be a divergence measure, where $ri(S)$ represents the relative interior of $S$. 
Let $V := \cbra{S_h}_{h=1}^K$ be a partition of $S$ 
and $M := \cbra{\mu_h}_{h = 1}^K$ a set of codevectors, such that 
$\mu_h \in ri(S_h)$, for all $h=1, \ldots, K$.
A quantizer $Q:S \rightarrow M$ is defined as the random variable
$Q(X) = \sum_{h=1}^K \mu_h \mathds{1}_{\sbra{X \in S_h}}$
and the vector quantization problem is formulated as
\begin{align*}
	\min_{M,V} ~ J(Q) := \E{d\pbra{X,Q}}. 
\end{align*}	
\end{problem}

\noindent
Vector quantization is a hard-clustering algorithm,
and, as such, assumes that the quantizer $Q$ 
assigns an input vector $X$ to a unique codevector $\mu_h\in M$
with probability one.
As a result, Problem \ref{prb:VQ} becomes equivalent to
\begin{align}
\min_{\cbra{\mu_h}_{h = 1}^K} 
	\sum_{h=1}^K \E{d\pbra{X,\mu_h} 
	\mathds{1}_{\sbra{X \in S_h}}} 
\label{eq:HardMu}
\end{align}	
for V being a Voronoi partition, i.e., for 
%
\begin{equation*}
S_h = \cbra{x \in S: h = \argmin\limits_{\tau = 
		1,\ldots,K} ~ d(x,\mu_{\tau})},\ h =1,\ldots,K.
\label{eq:HardS}
\end{equation*}
%
%
It is typically the case that
the actual distribution of $X\in S$ is unknown, and 
a set of independent realizations 
$\cbra{X_i}_{i=1}^n:= \cbra{X(\omega_i)}_{i=1}^n$, 
for $\omega_i\in \Omega$, are available. 
In case the observations $\cbra{X_i}_{i=1}^n$ are available a priori,
the solution of the VQ problem is traditionally approached 
with variants of the LBG algorithm \cite{linde1980algorithm}, 
a generalization of 
the Lloyd algorithm \cite{sabin1986global} which includes 
the widely used $k$-means algorithm \cite{bottou1995convergence}. 
%

When the training data are not 
available a priori but are being observed online, 
or when the processing of the entire dataset 
in every optimization iteration is computationally infeasible, 
a stochastic vector quantization algorithm can 
be defined as a recursive asynchronous stochastic approximation algorithm
based on gradient descent \cite{mavridis2020convergence}:

\begin{definition}[Stochastic Vector Quantization (sVQ) Algorithm]
Repeat:
{\small
\begin{equation*}
\begin{cases}
	\mu_h^{t+1} &= \mu_h^t - \alpha(v(h,t)) 
				\mathds{1}_{\sbra{X_{t+1} \in S_h^{t+1}}} \nabla_{\mu_h^{}}{d\pbra{X_{t+1}, \mu_h^t}}\\
	S_h^{t+1} &= \cbra{X\in S: 
				h = \argmin\limits_{\tau = 1,\ldots,k} ~ d(X,\mu_{\tau}^t)},~h \in K
\end{cases}
\end{equation*}
}%
for $t\geq0$ until convergence, 
where $\mu_h^0$ is given during initialization, 
and $v(h,t)$ represents the number of times the component $\mu_h$ has
been updated up until time $t$.
\label{def:sVQ}
\end{definition}

\subsection{Learning Vector Quantization for Classification}
\label{sSec:LVQ}

The supervised counterpart of vector quantization is 
the particularly attractive and intuitive approach 
of the competitive-learning Learning Vector Quantization (LVQ) algorithm,
initially proposed by Kohonen \cite{Kohonen1995}.
LVQ for binary classification is formulated
in the following optimization problem (and generalized to any type of classification task, see, e.g. \cite{duda2012}):

\begin{problem}
\label{prb:lvq}
	Let the pair of random variables 
	$\cbra{X,c} \in S\times \cbra{0,1}$ defined in a probability space
	$\pbra{\Omega, \mathcal{F}, \mathbb{P}}$, with
	$c$ representing the class of $X$ and $S\subseteq\mathbb{R}^d$.	
	Let 
	$M := \cbra{\mu_h}_{h = 1}^K$, where $\mu_h \in ri(S_h)$
	represent codevectors, and 
	define the set $C_\mu := \cbra{c_{\mu_h}}_{h = 1}^K$,
	such that $c_{\mu_h} \in \cbra{0,1}$ 
	represents the class of $\mu_h$ for all $h \in \cbra{1,\ldots,K}$.
	The quantizer $Q^c:S \rightarrow \cbra{0,1}$ is defined such that 
	$Q^c(X) = \sum_{h=1}^k c_{\mu_h} \mathds{1}_{\sbra{X \in S_h}}$.	
	Then, the minimum-error classification problem is formulated as
	{\small	
	\begin{align*}
		\min_{\cbra{\mu_h,S_h}_{h = 1}^K} ~ J_B(Q^c) := 
		\pi_1 \sum_{H_0} \mathbb{P}_1\sbra{X\in S_h} +
			 \pi_0 \sum_{H_1} \mathbb{P}_0\sbra{X\in S_h} 
		%
		%
	\end{align*}	
	}%
	where $\pi_i := \mathbb{P}\sbra{c = i}, 
	\mathbb{P}_i\cbra{\cdot} := \mathbb{P}\cbra{\cdot | c = i}$,
	and $H_i$ is defined as $H_i := \cbra{h\in\cbra{1,\ldots,K}:Q^c=i}$, 
	$i\in\cbra{0,1}$.
\end{problem}

\noindent
LVQ algorithms that solve Problem \ref{prb:lvq}
are similar in structure with the stochastic 
vector quantization algorithm of Def. \ref{def:sVQ},
and make use of a modified distortion measure, 
which in the case of the original LVQ1 algorithm \cite{Kohonen1995}
takes the form:
\begin{align*}
d^l(x,c_x,\mu,c_\mu) = \begin{cases}
				d(x,\mu),~ c_x=c_\mu \\
				-d(x,\mu),~ c_x\neq c_\mu			
				\end{cases}
\end{align*}
Generalizations of this definition 
based on similar principles have also been proposed
\cite{sato1996generalized,hammer2002generalized}.

\subsection{Bregman Divergences as Dissimilarity Measures}

Prototype-based algorithms 
rely on measuring the proximity between different vector representations.
In most cases the Euclidean distance or another convex metric is used,
but this can be generalized to alternative dissimilarity measures inspired by 
information theory and statistical analysis, such as the Bregman 
divergences:
\begin{definition}[Bregman Divergence]
	Let $ \phi: H \rightarrow \mathbb{R}$, 
	be a strictly convex function defined on 
	a vector space $H$ such that $\phi$  
	is twice F-differentiable on $H$. 
	The Bregman divergence 
	$d_{\phi}:H \times H \rightarrow \left[0,\infty\right)$
	is defined as:
	\begin{align*}
		d_{\phi} \pbra{x, \mu} = \phi \pbra{x} - \phi \pbra{\mu} 
							- \pder{\phi}{\mu} \pbra{\mu} \pbra{x-\mu},
	\end{align*}
	where $x,\mu\in H$, and the continuous linear map 
	$\pder{\phi}{\mu} \pbra{\mu}: H \rightarrow \mathbb{R}$ 
	is the Fr\'echet derivative of $\phi$ at $\mu$.
	\label{def:BregmanD}
\end{definition}

Notice that, as a divergence measure, Bregman divergence can be used 
to measure the dissimilarity of one probability distribution to another on a statistical manifold, and is a notion weaker than that of the distance. 
In particular, it does not need to be symmetric or satisfy the triangle inequality.
In this work, we will concentrate on nonempty, compact convex sets 
$S\subseteq \mathbb{R}^d$ 
so that the derivative of $d_\phi$ with respect to the second argument 
can be written as
{\small
\begin{align*}
\pder{d_{\phi}}{\mu}(x,\mu) 
&= \pder{\phi(x)}{\mu} - \pder{\phi(\mu)}{\mu} 
- \pder{^2 \phi(\mu)}{\mu^2}(x-\mu) + \pder{\phi(\mu)}{\mu} \\
&= - \pder{^2 \phi(\mu)}{\mu^2}(x-\mu) 
= - \abra{\nabla^2 \phi(\mu),(x-\mu)}	
\end{align*}
}
where $x,\mu\in S$, $\pder{}{\mu}$ represents differentiation
with respect to the second argument of $d_{\phi}$, and 
$\nabla^2 \phi(\mu)$ represents the Hessian matrix of $\phi$ at $\mu$.
%

\begin{example}
As a first example, $\phi(x) = \abra{x,x},\ x\in\mathbb{R}^d$,
gives the squared Euclidean distance 
$$d_\phi(x,\mu) = \|x-\mu\|^2$$ 
for which 
$\pder{d_{\phi}}{\mu}(x,\mu) = -2(x-\mu)$.
\end{example}

\begin{example}
\label{ex:Idiv}
A second interesting Bregman divergence that shows the connection 
to information theory, is the generalized I-divergence 
which results from 
$\phi(x) = \abra{x,\log x},\ x\in\mathbb{R}_{++}^d$
such that  
$$d_\phi(x,y) = \abra{x,\log x - \log \mu}
	- \abra{\mathds{1}, x - \mu}$$
for which $\pder{d_{\phi}}{\mu}(x,\mu) = - diag^{-1}(\mu) (x-\mu)$,
where $\mathds{1}\in\mathbb{R}^d$ is the vector of ones, and $diag^{-1}(\mu)\in\mathbb{R}_{++}^{d\times d}$
is the diagonal matrix with diagonal elements the inverse elements of $\mu$.
It is easy to see that $\phi(x)$ reduces to the Kullback-Leibler divergence if 
$\abra{\mathds{1}, x} =1$.
\end{example}
%

The family of Bregman divergences provides proximity measures
that have been shown to enhance the performance of a learning algorithm
\cite{babiker2017using}.
%
%
In addition, the following theorem shows that the use of Bregman divergences 
is both necessary and sufficient 
for the optimizer $\mu_h$ of (\ref{eq:HardMu}) to be 
analytically computed as the expected value of the data inside $S_h$,
which is implicitly used by many ``centroid'' algorithms, 
such as $k$-means \cite{bottou1995convergence}:
\begin{theorem}
\label{thm:bregman}
	Let $X: \Omega \rightarrow S$ be a random variable defined in the 
	probability space $\pbra{\Omega, \mathcal{F}, \mathbb{P}}$ such that
	$\E{X} \in ri(S)$, and let a distortion measure 
	$d:S \times ri(S) \rightarrow \left[0,\infty\right)$, where 
	$ri(S)$ denotes the relative interior of $S$. 
	Then $\mu := \E{X}$
	is the unique minimizer of $ \E{d\pbra{X,s}} $ in $ri(S)$,	
	if and only if $d$ is a Bregman divergence for any function $\phi$
	that satisfies the definition.
\end{theorem}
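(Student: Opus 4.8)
The plan is to prove the two implications separately, treating the ``if'' (sufficiency) direction as a short computation and reserving the real work for the ``only if'' (necessity) direction, which I read as the statement that $\E{X}$ is the minimizer \emph{for every} admissible $X$ with $\E{X}\in ri(S)$ exactly when $d$ is of Bregman type.

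\textbf{Sufficiency.} Suppose $d = d_\phi$. Using linearity of expectation together with $\E{X} = \mu$, I would expand
\[
\E{d_\phi(X,s)} - \E{d_\phi(X,\mu)} = \phi(\mu) - \phi(s) - \abra{\pder{\phi}{\mu}(s), \mu - s} = d_\phi(\mu, s),
\]
where the $\phi(X)$ terms cancel and the term that is linear in $X$ vanishes because $\E{X - \mu} = 0$. Since $\phi$ is strictly convex, $d_\phi(\mu,s) \ge 0$ with equality iff $s = \mu$, so $\mu = \E{X}$ is the unique minimizer in $ri(S)$. This direction needs only that $\E{X}$ and $\E{\phi(X)}$ are finite.

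\textbf{Necessity.} Here I assume $\mu = \E{X}$ minimizes $s \mapsto \E{d(X,s)}$ for every admissible $X$, and want to produce a strictly convex $\phi$ satisfying Definition~\ref{def:BregmanD} with $d = d_\phi$; throughout I would assume $d$ is smooth enough in its second argument to differentiate under the expectation. First I would write the first-order stationarity condition: interiority of $\mu$ and optimality give $\E{\pder{d}{\mu}(X, \mu)} = 0$. Writing $g(x,y) := \pder{d}{\mu}(x,y)$, fixing $y \in ri(S)$ and setting $H(u) := g(y+u, y)$, this becomes $\E{H(U)} = 0$ for every mean-zero $U$ supported near $0$. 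I would then extract the form of $H$ by feeding in simple distributions: a two-point mean-zero $U$ (mass $p$ at $a$, mass $1-p$ at $-\tfrac{p}{1-p}a$) yields full homogeneity $H(\lambda a) = \lambda H(a)$ (hence oddness), and a three-point equally weighted $U$ on $\cbra{u,\, v,\, -(u+v)}$ yields additivity $H(u) + H(v) = H(u+v)$. Homogeneity and additivity together force $H$ to be linear, so $g(x,y) = -A(y)(x-y)$ for a matrix-valued map $A(\cdot)$.

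Finally I would reconstruct $\phi$. The first-argument gradient $x \mapsto \nabla_x d(x, y_0)$ for a fixed reference $y_0 \in ri(S)$ has symmetric Jacobian (it is a Hessian of the scalar function $d(\cdot, y_0)$), hence is a gradient field, so I define $\phi$ by $\nabla \phi := \nabla_x d(\cdot, y_0)$, normalized by an affine term so that $\nabla\phi(y_0) = 0$ (which leaves $d_\phi$ unchanged). Using $d(y_0,y_0)=0$ and the fundamental theorem of calculus gives $d(x,y_0) = \phi(x) - \phi(y_0)$ at once, and I would verify $d(x,y) = d_\phi(x,y)$ for general $y$ by checking that both sides have the same second-argument derivative $-A(y)(x-y)$ and agree at $y = y_0$. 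Strict convexity of $\phi$ would follow from positive-definiteness of $A$, which comes from $y = x$ being a \emph{strict} minimizer of $d(x,\cdot)$ together with the assumed \emph{uniqueness} of the overall minimizer.

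\textbf{Main obstacle.} The delicate part is this last step: showing that the matrix field $A(y)$ is consistently the Hessian of a single scalar potential (the integrability/path-independence question) and that the resulting $\phi$ is genuinely strictly convex and twice F-differentiable as Definition~\ref{def:BregmanD} demands. This is precisely where the smoothness hypotheses on $d$ and the interchange of expectation and differentiation must be invoked with care. By contrast, the homogeneity-plus-additivity argument forcing $H$ to be linear is robust and constitutes the conceptual heart of the converse.
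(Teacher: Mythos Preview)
Your sufficiency computation coincides verbatim with the paper's. For necessity the paper offers no argument of its own and simply refers to Appendix~B of \cite{banerjee2005clustering}; your sketch (probe with finitely supported mean-zero distributions to force $\partial_\mu d(x,\mu)$ to be affine in $x$, then build $\phi$ from $d(\cdot,y_0)$) is exactly the strategy of that reference, so you are reconstructing what the paper invokes rather than taking a different route.

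One remark on your reconstruction step. The sentence about $\nabla_x d(\cdot,y_0)$ having ``symmetric Jacobian, hence is a gradient field'' is empty: it is already the gradient of the scalar $x\mapsto d(x,y_0)$, so nothing is being checked there. More substantively, the obstacle you flag --- that $A(y)$ must arise as the Hessian of a single potential --- is not really an abstract integrability/Poincar\'e question once you commit to $\phi(x):=d(x,y_0)$. From the affine form of $\partial_y d$ together with $d(y,y)=0$ you obtain $d(x,y)=\phi(x)-\phi(y)-\langle\Psi(y),\,x-y\rangle$ for some $\Psi$ with $\nabla\Psi=A^{\T}$; the remaining identification $\Psi=\nabla\phi$ (equivalently $A=\nabla^2\phi$) follows by differentiating the diagonal constraint $d(x,x)\equiv 0$, which gives $\nabla_x d(x,x)=-\nabla_y d(x,x)=0$, and then evaluating $\nabla_x d(x,y)=\nabla\phi(x)-\Psi(y)$ at $x=y$. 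So the work sits in exploiting $d(x,x)=0$, not in a separate closedness check on $A$; your verification ``both sides have the same $y$-derivative $-A(y)(x-y)$'' is circular until that step is done.
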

\begin{proof}
For necessity, identical arguments as in Appendix B of 
\cite{banerjee2005clustering} are followed.
For sufficiency, 
{\small 
\begin{align*}
&\E{d_{\phi}(X,s)} - \E{d_{\phi}(X,\mu)} =\\
&=\phi(\mu) + \pder{\phi}{\mu}(\mu) \pbra{\E{X}-\mu} - \phi(s) - \pder{\phi}{s}(s) \pbra{\E{X}-s} \\
&=\phi(\mu) - \phi(s) - \pder{\phi}{s}(s)\pbra{\mu-s} 
=d_{\phi}\pbra{\mu,s} 
\geq 0, \quad \forall s \in S
\end{align*}%
}%
with equality holding only when $s=\mu$ by the strict convexity of $\phi$, which completes the proof.	
\end{proof}

\noindent
In Section \ref{Sec:ODA}, 
we will show a similar result for the proposed algorithm that uses 
a soft-partition approach.

\section{Online Deterministic Annealing for Unsupervised and Supervised Learning}
\label{Sec:ODA}

Online 
vector quantization algorithms, 
are proven to converge to locally optimal configurations
\cite{mavridis2020convergence}.
However, as iterative machine learning algorithms,
their convergence properties and final configuration 
depend heavily on two design parameters:
the number of neurons/clusters $K$, and 
their initial configuration.
Inspired by the deterministic annealing framework 
\cite{rose1998deterministic}, 
we relax the the original optimization problem (\ref{eq:HardMu})
to a soft-clustering problem, and replace it by 
a sequence of deterministic optimization problems, 
parameterized by a temperature coefficient,
that are progressively solved 
at successively reducing temperature levels.
As will be shown, the annealing nature of this algorithm 
will contribute to avoiding poor local minima, 
provide robustness with respect to the initial conditions,
and induce a progressive increase in the cardinality of the  
set of clusters needed to be used, via a intuitive bifurcation phenomenon.

\subsection{Soft-Clustering and Annealing Optimization}


In the clustering problem (Problem \ref{prb:VQ}), 
the distortion function $J$ is typically
non convex and riddled with poor local minima.
To partially deal with this phenomenon, soft-clustering approaches have been 
proposed as a probabilistic framework for clustering.
In this case, an input vector $X$ is assigned, through the quantizer 
$Q$, to all codevectors $\mu_h\in M$ with probabilities $p(\mu_h|X)$
, where $\sum_{h=1}^K p(\mu_h|X)=1$.
In this regard, the quantizer $Q:S\rightarrow M$ 
becomes a discrete random variable, 
with the set $M$ being its image, and can be fully described by the values 
of $M=\cbra{\mu_h}_{h=1}^K$ and the probability functions 
$\cbra{p(\mu_h|x)}_{h=1}^K$.
In contrast, hard clustering assumes that
$Q$ is a simple random variable that can be described fully by 
$M$ and $V=\cbra{S_h}_{h=1}^K$, since $p(\mu_h|X)=\mathds{1}_{\sbra{X \in S_h}}$
(see Problem \ref{prb:VQ}).

For the randomized partition we can rewrite the expected distortion as
\begin{align*}
D &= \E{d_\phi(X,Q)} \\
  &= \E{\E{d_\phi(X,Q)|X}} \\
  &= \int p(x) \sum_\mu p(\mu|x) d_\phi(x,\mu) ~dx
\end{align*}
where $p(\mu|x)$ is the association probability relating the input vector $x$
with the codevector $\mu$.
We note that, at the limit, where
each input vector
is assigned to a unique codevector with probability one,
this reduces to the hard clustering
distortion.
%
%
The main idea in deterministic annealing, 
is to seek the distribution that minimizes $D$ subject to a 
specified level of randomness, measured by the Shannon entropy
\begin{align*}
H(X,M) &= \E{-\log p(X,Q)} \\
       &= H(X) + H(Q|X) \\
       &= H(X) - \int p(x) \sum_\mu p(\mu|x) \log p(\mu|x) ~dx
\end{align*}
by appealing to Jaynes' maximum entropy principle%
\footnote{Informally, Jaynes' principle states: of all the probability distributions that satisfy a given set of constraints, 
choose the one that maximizes the entropy.}
\cite{jaynes1957information}.
This multi-objective optimization is conveniently formulated as the minimization of the Lagrangian
\begin{equation}
F = D-TH
\label{eq:F}
\end{equation}
where $T$ is the temperature parameter that acts as a Lagrange multiplier.
Clearly, for large values of $T$ we maximize the entropy, and, as $T$ is lowered,
we trade entropy for reduction in distortion.
Equation (\ref{eq:F}) also represents the scalarization method for trade-off 
analysis between two performance metrics \cite{miettinen2012nonlinear}. 
As $T$ varies we essentially transition from one Pareto point to another, 
and the sequence of the 
solutions will correspond to 
a Pareto curve of the multi-objective 
optimization (\ref{eq:F}) 
that resembles annealing processes in chemical engineering.
In this regard, the entropy $H$, which is closely related 
to the ``purity'' of the clusters,
acts as a regularization term which is given progressively less weight 
as $T$ decreases.
%

As in the case of vector quantization, 
we form a coordinate block optimization algorithm to minimize $F$,
by successively minimizing it with respect to the 
association probabilities $p(\mu|x)$ 
and the codevector locations $\mu$.
Minimizing $F$ with respect to the association probabilities $p(\mu|x)$ is 
straightforward and yields the Gibbs distribution 
\begin{equation}
p(\mu|x) = \frac{e^{-\frac{d(x,\mu)}{T}}}
			{\sum_\mu e^{-\frac{d(x,\mu)}{T}}},~ \forall x\in S
\label{eq:E}
\end{equation}
while, in order to minimize $F$ with respect to the codevector locations $\mu$ 
we set the gradients to zero 
\begin{equation}
\begin{aligned}
\frac d {d\mu} D = 0 
& \implies 
\frac d {d\mu} \E{\E{d(X,\mu)|X}} = 0 \\
& \implies
\int p(x) p(\mu|x) \frac d {d\mu} d(x,\mu) ~dx = 0
\end{aligned}
\label{eq:M}
\end{equation}

In the following theorem, we show that
we can have analytical solution to the last optimization step 
(\ref{eq:M}) in a convenient centroid form, if $d$ is a Bregman divergence.
This is a similar result to Theorem \ref{thm:bregman} for
vector quantization.

\begin{theorem}
Assuming the conditional probabilities $p(\mu|x)$ are fixed, 
the Langragian $F$ in (\ref{eq:F}) is minimized with respect 
to the codevector locations $\mu$ by  
\begin{equation}
\mu^* = \E{X|\mu} = \frac{\int x p(x) p(\mu|x) ~dx}{p(\mu)}
\label{eq:mu_star}
\end{equation}
if $d:=d_\phi$ is a Bregman divergence for some function 
$\phi$ that satisfies Definition \ref{def:BregmanD}.
\label{thm:bregman_in_DA}
\end{theorem}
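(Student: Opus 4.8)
The plan is to reduce the claim to the centroid characterization already established in Theorem \ref{thm:bregman}, exploiting the fact that freezing the association probabilities removes the entropy term from the optimization. First I would observe that the entropy
\[
H(X,M) = H(X) - \int p(x) \sum_\mu p(\mu|x) \log p(\mu|x) ~dx
\]
depends only on the source density $p(x)$ and on the conditional probabilities $p(\mu|x)$, and not on the codevector locations. Consequently, with the $p(\mu|x)$ held fixed, $H$ is a constant with respect to $\mu$, and minimizing $F = D - TH$ over the codevector locations is equivalent to minimizing the distortion $D$ alone --- which is precisely the first-order condition set up in (\ref{eq:M}).

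Next I would note that
\[
D = \int p(x) \sum_\mu p(\mu|x) d_\phi(x,\mu) ~dx
\]
decouples across codevectors, since each $\mu$ appears only in its own summand. Thus the joint minimization over $M$ splits into independent problems, one per codevector, of the form $\min_\mu \int p(x) p(\mu|x) d_\phi(x,\mu) ~dx$. Using Bayes' rule $p(x|\mu) = p(x) p(\mu|x) / p(\mu)$, I would rewrite this per-codevector objective as
\[
\int p(x) p(\mu|x) d_\phi(x,\mu) ~dx = p(\mu)\, \E{d_\phi(X,\mu)\,\big|\,\mu},
\]
where the conditional expectation is taken under $p(x|\mu)$. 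Since $p(\mu)$ is a fixed positive constant once the association probabilities are frozen, the minimizer of the per-codevector objective coincides with the minimizer of $\E{d_\phi(X,\mu)\mid\mu}$.

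This is exactly the setting of Theorem \ref{thm:bregman}: its sufficiency direction states that, for a Bregman divergence $d_\phi$, the unique minimizer over $ri(S)$ of an expected distortion is the mean of the underlying distribution. Applying it to the conditional law $p(x|\mu)$ yields
\[
\mu^* = \E{X\mid\mu} = \frac{\int x\, p(x) p(\mu|x) ~dx}{p(\mu)},
\]
which is the claimed expression, and uniqueness follows from the strict convexity of $\phi$. The one technical point I would have to be careful about is verifying the hypothesis $\E{X\mid\mu}\in ri(S)$ needed to invoke Theorem \ref{thm:bregman}; this holds whenever the association probabilities are strictly positive --- as they are for the Gibbs form (\ref{eq:E}) --- so that $p(x|\mu)$ inherits the full support of $p(x)$ on the convex set $S$ and its barycenter falls in the relative interior. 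Beyond this regularity check, the argument is a direct reduction to the already-proven centroid formula, so I do not anticipate any substantive obstacle.
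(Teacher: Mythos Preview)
Your argument is correct, but it follows a different route from the paper. The paper proceeds by a direct first-order computation: it uses the identity $\frac{d}{d\mu} d_\phi(x,\mu) = -\nabla^2\phi(\mu)(x-\mu)$, substitutes into the stationarity condition (\ref{eq:M}), cancels the (positive-definite) Hessian, and solves $\int (x-\mu)\,p(x)p(\mu|x)\,dx = 0$ for $\mu$. You instead strip away the entropy term, decouple $D$ across codevectors, rewrite each summand via Bayes' rule as $p(\mu)\,\E{d_\phi(X,\mu)\mid\mu}$, and then invoke the sufficiency direction of Theorem~\ref{thm:bregman} applied to the conditional law $p(x|\mu)$.

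Your reduction has the advantage that it delivers an actual \emph{minimizer} (and its uniqueness) immediately, whereas the paper's derivative computation only exhibits a critical point and tacitly relies on the positive-definiteness of $\nabla^2\phi$ to upgrade this to a minimum. The paper's approach, on the other hand, is self-contained and makes explicit the mechanism --- the affine structure of $\partial d_\phi/\partial\mu$ --- that produces the centroid formula, without needing to recognize the per-codevector problem as an instance of Theorem~\ref{thm:bregman}. Both are short; yours is the more structural of the two.
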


\begin{proof}
If $d:=d_\phi$ is a Bregman divergence, then, 
by Definition \ref{def:BregmanD}, it follows that
\begin{align*}
\frac d {d\mu} d_\phi(x,\mu) = - \pder{^2 \phi(\mu)}{\mu^2}(x-\mu) 
\end{align*}
Therefore, (\ref{eq:M}) becomes
\begin{equation}
\int (x-\mu) p(x) p(\mu|x) ~dx= 0
\end{equation}
which is equivalent to (\ref{eq:mu_star}) since 
$\int p(x) p(\mu|x) ~dx = p(\mu)$.
\end{proof}

\subsection{Bifurcation Phenomena}

This optimization procedure takes place for decreasing values 
of the temperature coefficient $T$ such that the 
solution maintains minimum free energy
(thermal equilibrium) while gradually lowering the temperature.
Adding to the physical analogy, it is significant that, 
as the temperature is lowered, the
system undergoes a sequence of ``phase transitions'', which
consists of natural cluster splits where the cardinality of the 
codebook (number of clusters) increases. This is a bifurcation phenomenon 
and  provides a useful tool for controlling the size of the clustering model
relating it to the scale of the solution.

At very high temperature ($T\rightarrow\infty$) the optimization yields
uniform association probabilities 
\begin{align*}
p(\mu|x) = \lim_{T\rightarrow\infty} 
			\frac{e^{-\frac{d(x,\mu)}{T}}}
				{\sum_\mu e^{-\frac{d(x,\mu)}{T}}}
		 = \frac 1 K
\end{align*}
and, provided $d:=d_\phi$ is a Bregman divergence,
all the codevectors are located at the same point:
\begin{align*}
\mu = \E{X}
\end{align*}
which is the expected value of $X$ (Theorem \ref{thm:bregman}). 
This is true regardless of the number of codevectors available.
We refer to the number of different codevectors 
resulting from the optimization process as
\textit{effective codevectors}.
These define the cardinality of the codebook, 
which changes as we lower the temperature.
The bifurcation occurs when the solution above a critical temperature $T_c$ is 
no longer the minimum of the free energy $F$ for $T<T_c$. 
A set of coincident codevectors then splits into separate subsets.
These critical temperatures $T_c$ can be traced when the Hessian of $F$ 
loses its positive definite property, and
are, in some cases, computable (see Theorem 1 in \cite{rose1998deterministic}).
In other words, an algorithmic implementation needs only
as many codevectors as the number of effective codevectors, which
depends only on the temperature parameter, i.e. the Lagrange multiplier
of the multi-objective minimization problem in (\ref{eq:F}).
As will be shown in Section \ref{sSec:Algorithm}, 
we can detect the bifurcation points 
by maintaining and perturbing pairs of codevectors at each 
effective cluster so that they separate only when a critical temperature
is reached. 

\subsection{Online Deterministic Annealing for Clustering}

The conditional expectation $\E{X|\mu}$ in eq. (\ref{eq:mu_star})
can be approximated by the sample mean 
of the data points weighted 
by their association 
probabilities $p(\mu|x)$, i.e., 
$$\Eh{X|\mu} = \frac{\sum x p(\mu|x)}{p(\mu)}.$$ 
This approach, however, defines an offline (batch) optimization algorithm 
and requires the entire dataset to be available a priori,
subtly assuming that it is possible to store 
and also quickly access the entire dataset at each iteration. 
This is rarely the case in practical applications and 
results to computationally costly iterations that are slow to converge.
We propose an Online Deterministic Annealing (ODA) algorithm, 
that dynamically updates 
its estimate of the effective codevectors with every observation.
This results in a significant reduction in complexity, 
that comes in two levels.
The first refers to huge reduction in memory complexity, 
since we bypass the need to store the entire dataset, 
as well as the association probabilities 
$\cbra{p(\mu|x),\ \forall x}$ that map each
data point in the dataset to each cluster.
The second level refers to the nature of the optimization iterations.
In the online approach 
the optimization iterations increase in number
but become much faster, and practical convergence is 
often after a smaller number of observations.

To define an online training rule 
for the above optimization framework,
we formulate a stochastic approximation algorithm 
to \emph{recursively estimate $\E{X|\mu}$ directly}.
Stochastic approximation, first introduced in 
\cite{robbins1951stochastic},
was originally conceived as a tool for statistical computation, 
and, since then, has become a central tool in a number of 
different disciplines, 
often times unbeknownst to the users, researchers and practitioners.
It offers an online, adaptive, 
and computationally inexpensive optimization framework, 
properties that make it an ideal optimization method
for machine learning algorithms. 
%
%
In addition to its connection with optimization and learning 
algorithms, however, 
stochastic approximation is strongly connected to dynamical systems, as well,
a property that allows the study of its convergence through the analysis of 
an ordinary differential equation, 
as illustrated in the following theorem:

%
\begin{theorem}[\cite{borkar2009stochastic}, Ch.2]
\label{thm:borkar}
	Almost surely, the sequence $\cbra{x_n}\in S\subseteq\mathbb{R}^d$ 
	generated by the following stochastic approximation scheme:
	\begin{align}
		x_{n+1} = x_n + \alpha(n) \sbra{h(x_n) + M_{n+1}},\ n \geq 0	
	\label{eq:sa}	
	\end{align}
	with prescribed $x_0$, 
	\textit{converges} to a (possibly sample path dependent)
	compact, connected, internally chain transitive, invariant set
	of the o.d.e:
	\begin{align}
		\dot{x}(t) = h\pbra{x(t)}, ~ t \geq 0, 	
	\label{eq:sa_ode}	
	\end{align}
	where $x:\mathbb{R}_+\rightarrow\mathbb{R}_d$ and $x(0) = x_0$, 
	provided the following assumptions hold:
	\begin{itemize}
	\setlength\itemsep{0em}
	\item[(A1)] The map $h:\mathbb{R}^d \rightarrow \mathbb{R}^d$ is Lipschitz
		in $S$,	i.e., $\exists L$ with $0 < L < \infty$ such that
		$\norm{h(x)-h(y)} \leq L\norm{x-y}, ~ x,y \in S$,
	\item[(A2)] The stepsizes $\cbra{\alpha(n) \in \mathbb{R}_{++}, ~ n \geq 0}$
	satisfy
		$ \sum_n \alpha(n) = \infty$, and $\sum_n \alpha^2(n) < \infty$	,
	\item[(A3)] $\cbra{M_n}$ is a martingale difference sequence 
		with respect to the increasing family of $\sigma$-fields
		$ \mathcal{F}_n := \sigma \pbra{ x_m, M_m,~ m \leq n }$, ${n \geq 0}$,
		i.e., $\E{M_{n+1}|\mathcal{F}_n} = 0 ~ a.s.$, for all $n \geq 0$,
		and, furthermore, $\cbra{M_{n}}$ are square-integrable with 
		$ \E{\norm{M_{n+1}}^2|\mathcal{F}_n} \leq K \pbra{ 1 + \norm{x_n}^2 }, 
		~ a.s.$, where $n \geq 0 $ for some $K >0$,
	\item[(A4)] The iterates $\cbra{x_n}$ remain bounded a.s., i.e.,
		${ \sup_n \norm{x_n} < \infty}$ $ a.s.$
	\end{itemize}
\end{theorem}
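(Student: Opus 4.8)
The plan is to establish convergence via the \emph{ODE method}: relate the discrete iterates in (\ref{eq:sa}) to the continuous flow of (\ref{eq:sa_ode}) by showing that a suitable interpolation of $\cbra{x_n}$ is an \emph{asymptotic pseudotrajectory} of the semiflow $\Phi$ generated by $h$, and then invoke the fact that the limit set of a bounded asymptotic pseudotrajectory is a compact, connected, internally chain transitive, invariant set of $\Phi$ (the Bena\"{\i}m--Hirsch characterization). First I would define the time scale $t(0)=0$, $t(n)=\sum_{m=0}^{n-1}\alpha(m)$, which diverges by (A2), and form the piecewise-linear interpolation $\bar{x}(t(n))=x_n$ extended linearly on each $\sbra{t(n),t(n+1)}$. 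By (A4) the iterates, hence $\bar{x}(\cdot)$, remain in a compact set almost surely, on which (A1) supplies a uniform Lipschitz constant $L$ and a uniform bound on $h$; this guarantees that the flow $\Phi$ of (\ref{eq:sa_ode}) exists globally and behaves well on the relevant compact set.

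Next I would isolate the two sources of error. Writing (\ref{eq:sa}) in integrated form and comparing it to the ODE solution over a window $\sbra{t(n),t(n)+T}$, the discrepancy splits into a \emph{noise term} $\sum \alpha(m) M_{m+1}$ and a \emph{discretization term} arising from replacing $h(\bar{x}(s))$ by $h(x_m)$ on each subinterval. For the noise term, I would define the martingale $\zeta_n:=\sum_{m=0}^{n-1}\alpha(m) M_{m+1}$; using the martingale difference property and the conditional second-moment bound in (A3) together with the square-summability of the stepsizes in (A2) and the boundedness in (A4), one checks that $\zeta_n$ is an $L^2$-bounded martingale, hence converges almost surely by the martingale convergence theorem. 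Consequently its increments over any fixed window $\sbra{t(n),t(n)+T}$ vanish as $n\to\infty$. For the discretization term, the Lipschitz bound (A1) controls $\norm{h(\bar{x}(s))-h(x_m)}$ by $L$ times the interpolation gap, which is of order $\alpha(m)\to 0$.

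Feeding these two vanishing bounds into a Gronwall inequality over the window $\sbra{t(n),t(n)+T}$ would then yield $\sup_{s\in\sbra{0,T}}\norm{\bar{x}(t(n)+s)-\Phi_s(\bar{x}(t(n)))}\to 0$ almost surely, which is precisely the asymptotic pseudotrajectory property. Invoking the Bena\"{\i}m--Hirsch result identifies the almost-sure limit set of $\cbra{x_n}$ with a compact, connected, internally chain transitive, invariant set of (\ref{eq:sa_ode}), completing the argument.

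I expect the main obstacle to be the uniform-in-window tracking estimate: combining the almost-sure decay of the martingale increments with the Lipschitz-driven Gronwall bound so that the tracking error is controlled \emph{simultaneously} over the whole interval $\sbra{0,T}$ and for all sufficiently large $n$, rather than merely pointwise in time. The boundedness assumption (A4) is doing essential work throughout, since without an a priori compact set the Lipschitz constant, the conditional second-moment bound on the noise, and even the global existence of the flow $\Phi$ could all fail.
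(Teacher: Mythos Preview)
The paper does not supply its own proof of this theorem: it is quoted verbatim as a known result from \cite{borkar2009stochastic}, Ch.~2, and is used as a black box in the proof of Theorem~\ref{thm:oda_sa}. Your sketch is precisely the standard ODE-method argument that Borkar develops in that chapter (interpolation on the algorithmic time scale, martingale convergence for the noise via (A2)--(A4), Lipschitz/Gronwall tracking, then the Bena\"{\i}m--Hirsch characterization of limit sets), so there is nothing to compare against and your outline is consistent with the cited source.
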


As an immediate result, the following corollary also holds:
\begin{corollary}
If the only internally chain transitive invariant sets for
(\ref{eq:sa_ode}) are isolated equilibrium points,
then, almost surely, $\cbra{x_n}$ converges to a, 
possibly sample dependent, equilibrium point of (\ref{eq:sa_ode}).  
\label{crl:sa_equillibria}
\end{corollary}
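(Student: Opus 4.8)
The plan is to derive the corollary directly from Theorem \ref{thm:borkar}, treating it as a specialization of the general set-convergence statement to the case where the relevant invariant sets happen to be singletons. First I would invoke Theorem \ref{thm:borkar} to conclude that, on a probability-one event, the sequence $\cbra{x_n}$ converges to some compact, connected, internally chain transitive, invariant set $A$ of the o.d.e.\ (\ref{eq:sa_ode}). Since this set is asserted to be possibly sample-path dependent, I would write $A = A_\omega$ and carry the $\omega$-dependence through the argument.

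Next I would bring in the hypothesis that the only internally chain transitive invariant sets of (\ref{eq:sa_ode}) are isolated equilibrium points. Because $A_\omega$ is, by Theorem \ref{thm:borkar}, itself internally chain transitive and invariant, it must coincide with such a set. The key structural step is to exploit connectedness: the isolated equilibria are, by definition, pairwise disjoint closed points, so any union of two or more of them is disconnected. Since $A_\omega$ is connected, it cannot be such a union and must therefore equal a single equilibrium point, say $A_\omega = \cbra{x^*_\omega}$.

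Finally, I would translate convergence to a set into convergence to a point. Under the notion of convergence to an invariant set used in Theorem \ref{thm:borkar}, namely that $\mathrm{dist}(x_n, A_\omega) \to 0$, and because $A_\omega$ is the singleton $\cbra{x^*_\omega}$, this distance reduces to $\norm{x_n - x^*_\omega}$, so that $x_n \to x^*_\omega$ almost surely. As $x^*_\omega$ may vary with the sample path, the limit is a possibly sample-dependent equilibrium point, which is exactly the claim.

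I expect no serious obstacle here, since the statement is an immediate consequence of Theorem \ref{thm:borkar}; the only point requiring genuine care is the use of connectedness to rule out the sequence accumulating at several distinct equilibria simultaneously. It is precisely this step that upgrades ``convergence to an internally chain transitive invariant set'' to ``convergence to a single equilibrium point,'' and without the connectedness assertion in Theorem \ref{thm:borkar} the conclusion would not follow.
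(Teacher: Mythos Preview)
Your argument is correct and is precisely the natural justification one would give. The paper itself does not supply a proof of this corollary at all: it simply introduces it with ``As an immediate result, the following corollary also holds,'' treating it as a direct consequence of Theorem~\ref{thm:borkar}, so your proposal is in fact more detailed than what appears in the paper.
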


Now we are in place to prove the following theorem:

\begin{theorem}
Let $S$ a vector space, $\mu\in S$, and
$X: \Omega \rightarrow S$
be a random variable defined in a 
probability space $\pbra{\Omega, \mathcal{F}, \mathbb{P}}$.
Let $\cbra{x_n}$ be a sequence of independent realizations of $X$,
and $\cbra{\alpha(n)>0}$ a sequence of stepsizes such that
$ \sum_n \alpha(n) = \infty$, and $\sum_n \alpha^2(n) < \infty$.
Then the random variable $m_n = \nicefrac{\sigma_n}{\rho_n}$,
where $(\rho_n, \sigma_n)$ are 
sequences defined by
\begin{equation}
\begin{aligned}
\rho_{n+1} &= \rho_n + \alpha(n) \sbra{ p(\mu|x_n) - \rho_n} \\
\sigma_{n+1} &= \sigma_n + \alpha(n) \sbra{ x_n p(\mu|x_n) - \sigma_n},
\end{aligned}
\label{eq:rhosigma}
\end{equation}
converges to $\E{X|\mu}$ almost surely, i.e. 
$m_n\xrightarrow{a.s.} \E{X|\mu}$.
\label{thm:oda_sa}
\end{theorem}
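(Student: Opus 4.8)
The plan is to recognize each of the two coupled recursions in (\ref{eq:rhosigma}) as an instance of the stochastic approximation scheme (\ref{eq:sa}) and then invoke Theorem \ref{thm:borkar} with Corollary \ref{crl:sa_equillibria}. First I would rewrite each update in the canonical form $x_{n+1} = x_n + \alpha(n)\sbra{h(x_n)+M_{n+1}}$, using the natural filtration $\mathcal{F}_n := \sigma(x_0,\dots,x_{n-1})$, with respect to which $x_n$ is independent of $\mathcal{F}_n$ since the $x_n$ are i.i.d. copies of $X$. For the $\rho$-recursion, $\E{p(\mu|x_n)\mid\mathcal{F}_n} = \E{p(\mu|X)} = p(\mu)$, so the mean field is $h_\rho(\rho) = p(\mu)-\rho$ and the noise is the martingale difference $M^\rho_{n+1} = p(\mu|x_n)-p(\mu)$. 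For the $\sigma$-recursion, $\E{x_n p(\mu|x_n)\mid\mathcal{F}_n} = \E{X p(\mu|X)}$, which by (\ref{eq:mu_star}) equals $p(\mu)\,\E{X|\mu}$; hence $h_\sigma(\sigma) = p(\mu)\E{X|\mu}-\sigma$ with noise $M^\sigma_{n+1} = x_n p(\mu|x_n)-\E{X p(\mu|X)}$.

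Next I would verify assumptions (A1)--(A4) for each recursion. Both mean fields are affine, hence globally Lipschitz, giving (A1); (A2) is exactly the stepsize hypothesis in the statement; and (A3) holds since $\cbra{M^\rho_n},\cbra{M^\sigma_n}$ are martingale differences by construction, with the square-integrability bound following from the boundedness of the Gibbs weight $p(\mu|x)\in(0,1)$ together with a finite second moment of $X$ (which is automatic in the working setting where $S$ is compact). The one assumption needing a short argument is the a.s. boundedness (A4), which I would obtain from the averaging structure of the iterations: for $\alpha(n)\in(0,1]$ the update $\rho_{n+1} = (1-\alpha(n))\rho_n + \alpha(n)p(\mu|x_n)$ is a convex combination of $\rho_n$ and a number in $(0,1)$, so $\rho_n$ stays in the convex hull of $\cbra{\rho_0}\cup(0,1)$ and remains bounded; the identical convex-combination reasoning keeps $\sigma_n$ in the bounded convex hull of $\cbra{\sigma_0}\cup\cbra{x\,p(\mu|x):x\in S}$.

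With the hypotheses checked, the associated o.d.e.'s are the linear systems $\dot\rho = p(\mu)-\rho$ and $\dot\sigma = p(\mu)\E{X|\mu}-\sigma$, each possessing a unique, globally asymptotically stable equilibrium, namely $\rho^\ast = p(\mu)$ and $\sigma^\ast = p(\mu)\E{X|\mu}$, so that the only internally chain transitive invariant sets are these isolated points. Corollary \ref{crl:sa_equillibria} then gives $\rho_n \xrightarrow{a.s.} p(\mu)$ and $\sigma_n \xrightarrow{a.s.} p(\mu)\E{X|\mu}$. Intersecting the two probability-one events and using $p(\mu)>0$, continuity of $(\rho,\sigma)\mapsto\sigma/\rho$ at $(p(\mu),\,p(\mu)\E{X|\mu})$ yields $m_n = \nicefrac{\sigma_n}{\rho_n}\xrightarrow{a.s.}\E{X|\mu}$, as claimed.

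I expect the main (and essentially only) obstacle to be the a.s. boundedness condition (A4), which is usually the delicate part of stochastic approximation arguments; here it is disarmed by the convex-combination form of the two recursions, so the real work is recognizing and justifying that structure rather than producing any hard estimate. A secondary point worth stating explicitly is the standing assumption $p(\mu)>0$, without which the ratio $\nicefrac{\sigma_n}{\rho_n}$ would be ill-defined in the limit.
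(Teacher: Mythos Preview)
Your proposal is correct and follows essentially the same route as the paper: cast each recursion in the canonical stochastic-approximation form, invoke Theorem~\ref{thm:borkar} and Corollary~\ref{crl:sa_equillibria} to obtain $\rho_n\xrightarrow{a.s.}p(\mu)$ and $\sigma_n\xrightarrow{a.s.}p(\mu)\E{X|\mu}$ from the linear o.d.e.'s, and then pass to the ratio. If anything, you are more careful than the paper, which simply asserts that ``both stochastic approximation algorithms satisfy the conditions of Theorem~\ref{thm:borkar}'' without spelling out (A1)--(A4); your convex-combination argument for (A4) and your explicit mention of $p(\mu)>0$ are worthwhile additions.
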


\begin{proof}
We will use the facts that $p(\mu)=\E{p(\mu|x)}$ and 
$\E{\mathds{1}_{\sbra{\mu}}X} = \E{xp(\mu|x)}$.
The recursive equations (\ref{eq:rhosigma}) are 
stochastic approximation algorithms of the form:
\begin{equation}
\begin{aligned}
\rho_{n+1} &= \rho_n + \alpha(n)  
	[ (p(\mu) - \rho_n) + \\ 
	&\quad\quad\quad\quad\quad\quad\quad\quad
	(p(\mu|x_n)-\E{p(\mu|X)}) ] \\
\sigma_{n+1} &= \sigma_n + \alpha(n) 
	[ (\E{\mathds{1}_{\sbra{\mu}}X} - \sigma_n) + \\
	&\quad\quad\quad\quad\quad\quad
	(x_n p(\mu|x_n) - \E{x_n p(\mu|X)})  ]
\end{aligned}
\label{eq:rhosigma_sa}
\end{equation}
It is obvious that both stochastic approximation algorithms
satisfy the conditions of 
Theorem \ref{thm:borkar} and Corollary \ref{crl:sa_equillibria}.
As a result, they converge to the asymptotic solution of the 
differential equations
\begin{equation*}
\begin{aligned}
\dot \rho &= p(\mu) - \rho \\
\dot \sigma &= \E{\mathds{1}_{\sbra{\mu}}X} - \sigma
\end{aligned}
\end{equation*}
which can be trivially derived through standard ODE analysis to 
be $\pbra{p(\mu), \E{\mathds{1}_{\sbra{\mu}}X}}$.
In other words, we have shown that
\begin{equation}
\pbra{\rho_n,\sigma_n} \xrightarrow{a.s.} \pbra{p(\mu), \E{\mathds{1}_{\sbra{\mu}}X}}
\end{equation}
The convergence of $m_n$ follows from the fact that 
$\E{X|\mu} = \nicefrac{\E{\mathds{1}_{\sbra{\mu}}X}}{p(\mu)}$,
and standard results on the convergence 
of the product of two random variables.
\end{proof}

As a direct consequence of this theorem, the following corollary 
provides an online learning rule that solves the
optimization problem of the deterministic annealing algorithm.

\begin{corollary}
The online training rule 
\begin{equation}
\begin{cases}
\rho_i(n+1) &= \rho_i(n) + \alpha(n) \sbra{ \hat p(\mu_i|x_n) - \rho_i(n)} \\
\sigma_i(n+1) &= \sigma_i(n) + \alpha(n) \sbra{ x_n \hat p(\mu_i|x_n) - \sigma_i(n)}
\end{cases}
\label{eq:oda_learning1}
\end{equation}
where the quantities $\hat p(\mu_i|x_n)$ and $\mu_i(n)$ 
are recursively updated 
as follows:
\begin{equation}
\begin{aligned}
\hat p(\mu_i|x_n) &= \frac{\rho_i(n) e^{-\frac{d(x_n,\mu_i(n))}{T}}}
			{\sum_i \rho_i(n) e^{-\frac{d(x_n,\mu_i(n))}{T}}} \\
\mu_i(n) &= \frac{\sigma_i(n)}{\rho_i(n)},
\end{aligned}
\label{eq:oda_learning2}
\end{equation}
converges almost surely to a possibly sample path dependent solution of the block optimization 
(\ref{eq:E}), (\ref{eq:mu_star}).
\end{corollary}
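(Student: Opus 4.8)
The plan is to recognize the entire coupled recursion (\ref{eq:oda_learning1})--(\ref{eq:oda_learning2}) as a single, multidimensional instance of the stochastic approximation scheme (\ref{eq:sa}) and then invoke Theorem \ref{thm:borkar} together with Corollary \ref{crl:sa_equillibria}. The essential difference from Theorem \ref{thm:oda_sa} is that there the association probabilities were held fixed and each codevector was treated in isolation, whereas here the quantities $\hat p(\mu_i|x_n)$ depend on the current iterate through $\mu_i(n)=\sigma_i(n)/\rho_i(n)$ and through the normalization over all components. This feedback coupling, absent in Theorem \ref{thm:oda_sa}, is the crux of the argument.

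First I would stack the variables into a single state $\theta_n:=(\rho_1(n),\sigma_1(n),\ldots,\rho_K(n),\sigma_K(n))$ and write (\ref{eq:oda_learning1}) in the form $\theta_{n+1}=\theta_n+\alpha(n)[h(\theta_n)+M_{n+1}]$, where $h(\theta):=\E{G(\theta,X)}$ is the conditionally averaged update and $M_{n+1}:=G(\theta_n,x_n)-\E{G(\theta_n,X)|\mathcal{F}_n}$ is the associated noise. Using $p(\mu_i)=\E{\hat p(\mu_i|X)}$ and $\E{\mathds{1}_{\sbra{\mu_i}}X}=\E{X\,\hat p(\mu_i|X)}$ exactly as in Theorem \ref{thm:oda_sa}, the drift reads $h^{\rho}_i(\theta)=p(\mu_i)-\rho_i$ and $h^{\sigma}_i(\theta)=\E{\mathds{1}_{\sbra{\mu_i}}X}-\sigma_i$ (with $p(\mu_i)$ and $\E{\mathds{1}_{\sbra{\mu_i}}X}$ now depending on $\theta$), so the associated o.d.e.\ (\ref{eq:sa_ode}) is $\dot\rho_i=p(\mu_i)-\rho_i$, $\dot\sigma_i=\E{\mathds{1}_{\sbra{\mu_i}}X}-\sigma_i$. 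Its equilibria satisfy $\rho_i=p(\mu_i)$ and $\sigma_i=\E{\mathds{1}_{\sbra{\mu_i}}X}$, whence $\mu_i=\sigma_i/\rho_i=\E{X|\mu_i}$, which is precisely the centroid condition (\ref{eq:mu_star}); at such a self-consistent point the weights $\hat p(\mu_i|x)$ are exactly the Gibbs association probabilities (\ref{eq:E}). Thus equilibria of the o.d.e.\ are in one-to-one correspondence with the fixed points of the block iteration (\ref{eq:E}), (\ref{eq:mu_star}).

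Next I would verify the hypotheses (A1)--(A4). (A2) holds by assumption on $\{\alpha(n)\}$. For (A3), $M_{n+1}$ is a martingale difference by construction, and it is square-integrable with bounded conditional variance because $\hat p(\mu_i|x)\in[0,1]$ and $x_n\in S$ with $S$ compact. For (A4), I would observe that the $\rho$-update is a convex combination, $\rho_i(n+1)=(1-\alpha(n))\rho_i(n)+\alpha(n)\hat p(\mu_i|x_n)$, so the simplex $\{\rho_i\ge0,\ \sum_i\rho_i=1\}$ is invariant and each $\rho_i$ stays in $[0,1]$; similarly each $\sigma_i$ is a running average of the bounded quantity $x_n\hat p(\mu_i|x_n)$ and hence remains bounded, giving $\sup_n\norm{\theta_n}<\infty$ almost surely.

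The main obstacle is assumption (A1): the drift involves the ratio $\mu_i=\sigma_i/\rho_i$ inside the smooth softmax and the twice-differentiable Bregman divergence $d$, and this ratio is singular as $\rho_i\to0$, so $h$ is not globally Lipschitz. I would handle this by confining the analysis to a compact region on which each $\rho_i$ is bounded away from zero, arguing that such a region is invariant thanks to the strict positivity of the Gibbs weights and a positive-mass initialization, so that every effective codevector retains positive mass (codevectors whose mass vanishes being pruned separately). On such a region $h$ is a composition of smooth maps and therefore Lipschitz, establishing (A1). With (A1)--(A4) in place, Theorem \ref{thm:borkar} gives almost sure convergence to an internally chain transitive invariant set of the o.d.e.; away from the critical (bifurcation) temperatures these sets reduce to the isolated equilibria characterized above, so Corollary \ref{crl:sa_equillibria} yields almost sure convergence to an equilibrium point, i.e.\ to a solution of the block optimization (\ref{eq:E}), (\ref{eq:mu_star}), which completes the proof.
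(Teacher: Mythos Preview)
The paper itself supplies no proof for this corollary: it is asserted as ``a direct consequence'' of Theorem~\ref{thm:oda_sa} and left at that. Your proposal therefore goes well beyond what the paper does, and in doing so it correctly puts its finger on the point the paper glosses over: in Theorem~\ref{thm:oda_sa} the association probabilities $p(\mu|x)$ are \emph{fixed} and each component is analyzed in isolation, whereas in the corollary $\hat p(\mu_i|x_n)$ depends on the current iterate through $\mu_i(n)=\sigma_i(n)/\rho_i(n)$ and through the softmax normalization across all components. That feedback coupling means the corollary is not literally a special case of Theorem~\ref{thm:oda_sa}; one must re-run the Borkar argument on the full stacked system, which is exactly what you propose.

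Your route---stacking the state, identifying the drift and martingale noise, verifying (A2)--(A4) via the convex-combination/simplex structure and compactness of $S$, and then arguing local Lipschitzness of the drift on a region where the $\rho_i$ are bounded away from zero---is sound and is the natural way to make the corollary rigorous. The one step you appropriately flag as delicate is (A1) near $\rho_i\to 0$; your invariance argument (positive Gibbs weights plus positive initialization keep $\rho_i$ uniformly positive) is the right idea, though making it fully rigorous requires a little care since the lower bound on $\hat p(\mu_i|x)$ itself depends on the iterate. Likewise, your appeal to Corollary~\ref{crl:sa_equillibria} needs the equilibria to be isolated, which you correctly tie to being away from the critical temperatures; the paper makes the same tacit assumption elsewhere. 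In short, your argument is a genuine strengthening of the paper's treatment: the paper simply invokes Theorem~\ref{thm:oda_sa}, while you supply the missing verification for the coupled dynamics.
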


Finally, the learning rule
(\ref{eq:oda_learning1}), (\ref{eq:oda_learning2}) can be used to 
define a consistent (histogram)
density estimator at the limit $T\rightarrow 0$. 
%
This follows from the fact that as $T\rightarrow 0$, 
the number of clusters $K$ goes to infinity, 
$p(\mu_h|X)\rightarrow \mathds{1}_{\sbra{X\in S_h}}$, and,
as a result, $F\rightarrow J$, i.e., the consistency of
Alg. \ref{alg:ODA} can be studied with similar arguments 
to the stochastic divergence-based 
vector quantization algorithm (\ref{def:sVQ}) 
(see \cite{mavridis2020convergence,devroye2013}).
%

\subsection{Online Deterministic Annealing for Classification}

We can extend the proposed learning algorithm 
to be used for classification as well.
In this case we can rewrite the expected distortion as
\begin{align*}
D = \E{d^b(c_X,Q^c)} 
\end{align*}
where $ d^b(c_x,c_\mu) = \mathds{1}_{\sbra{c_x\neq c_\mu}}$.
%
Because $d^b$ is not differentiable, 
using similar principles as in the case of LVQ, 
we can instead approximate the optimal solution by 
solving the minimization 
problem for the following distortion measure
\begin{equation}
d^c(x,c_x,\mu,c_\mu) = \begin{cases}
				d(x,\mu),~ c_x=c_\mu \\
				0,~ c_x\neq c_\mu			
				\end{cases}
\label{eq:dc}
\end{equation}
This particular choice for the distortion measure $d^c$ 
will lead to some interesting regularization properties 
of the proposed online approach (see Section \ref{sSec:Algorithm}). 

It is easy to show that the coordinate block optimization steps 
(\ref{eq:E}) and (\ref{eq:mu_star}), in this case become:
%
\begin{align*}
p(\mu,c_\mu|x,c_x) = \frac{e^{-\frac{d^c(x,c_x,\mu,c_\mu)}{T}}}
			{\sum_{\mu,c_\mu} e^{-\frac{d^c(x,c_x,\mu,c_\mu)}{T}}},\
			\text{and} \\
\mu^* = \frac{\sum_{c_x=c_\mu} x p(x,c_x) p(\mu,c_\mu|x,c_x)}
 {\sum_{c_x=c_\mu} p(x,c_x) p(\mu,c_\mu|x,c_x)}
\end{align*}
%
%
%
respectively. 
In the last step, we have assumed that the class $c_\mu$ 
of each centroid $\mu$ is given and cannot be
changed dynamically by the algorithm, 
which results to the minimization with respect
to $\mu$ only. 
In a similar fashion, 
it can be shown that the 
online learning rule that solves the
optimization problem of the deterministic annealing algorithm
for classification, based on the distortion measure 
(\ref{eq:dc}), is given by:

\begin{equation}
\begin{aligned}
\rho_i(n+1) =& \rho_i(n) + \alpha(n) \mathds{1}_{\sbra{c_{x_j}=c_{\mu_i}}} \\ 
&  \sbra{\hat p(\mu_i,c_{\mu_i}|x_n,c_{x_n}) - \rho_i(n)} \\
\sigma_i(n+1) =& \sigma_i(n) + \alpha(n) \mathds{1}_{\sbra{c_{x_j}=c_{\mu_i}}} \\ 
& \sbra{x_n \hat p(\mu_i,c_{\mu_i}|x_n,c_{x_n}) - \sigma_i(n)}
\end{aligned}
\label{eq:oda_learning1c}
\end{equation}
where 
%
\begin{equation}
\begin{aligned}
\hat p(\mu_i,c_{\mu_i}|x_n,c_{x_n}) &= \frac{\rho_i(n) e^{-\frac{d^c(x_n,c_{x_n},\mu_i(n),c_{\mu_i(n)})}{T}}}
			{\sum_i \rho_i(n) e^{-\frac{d^c(x_n,c_{x_n},\mu_i(n),c_{\mu_i(n)})}{T}}} \\
\mu_i(n) &= \frac{\sigma_i(n)}{\rho_i(n)}
\end{aligned}
\label{eq:oda_learning2c}
\end{equation}
At the limit $T\rightarrow 0$, 
the quantization scheme described above 
equipped with a majority-vote classification rule is strongly Bayes 
risk consistent, i.e., converges to the optimal (Bayes) probability 
of error (see Ch. 21 in \cite{devroye2013}). 
%
However, due to the choice of the distortion measure $d^c$ in 
(\ref{eq:dc}) used in ODA for classification, 
the algorithm can be used 
to estimate consistent class-conditional density estimators, 
which define 
the natural classification rule:
\begin{equation}
\hat c(x) = c_{\mu_{h^*}}
\end{equation}
where $h^* = \argmax\limits_{\tau = 
		1,\ldots,K} ~ p(\mu_\tau|x),~ h \in \cbra{1, \ldots, K}$.

\subsection{The algorithm}
\label{sSec:Algorithm}

The proposed Online Deterministic Annealing (ODA) algorithm %
(Algorithm \ref{alg:ODA}),  
is based on (\ref{eq:oda_learning1c}), (\ref{eq:oda_learning2c}), and
can be used for both clustering and classification alike,
depending on whether the data
belong to a single (clustering) or several classes (classification).
%

\textit{Temperature Schedule.}
The temperature schedule $T_i$ plays an important role in 
the behavior of the algorithm.
Starting at high temperature
$T_{max}$ ensures the correct operation of the algorithm.
The value of $T_{max}$ depends on the domain of the data 
and should be large enough such that there is only one effective codevector at $T=T_{max}$. 
When the range of the domain of the data is not known a priori, 
overestimation is recommended. 
The stopping temperature $T_{min}$ can be set a priori or be decided
online depending on the performance of the model at each temperature level. 
The temperature step $dT_i=T_{i-1}-T_{i}$ should be small enough such 
that no critical temperature is missed. 
On the other hand, the smaller the step $dT_i$,
the more optimization problems need to be solved.
It is common practice to use the geometric series $T_{i+1}=\gamma T_i$. 
%

\textit{Stochastic Approximation.}
Regarding the stochastic approximation stepsizes, 
simple time-based learning rates, 
e.g. of the form $\alpha_n = \nicefrac{1}{a+ bn}$, 
have been sufficient 
for fast convergence in all our experiments so far.
Convergence is checked with the condition
$d_\phi(\mu_i^n,\mu_i^{n-1})<\epsilon_c$
for a given threshold $\epsilon_c$ that can depend on the domain of $X$.
Exploring adaptive learning rates would be an interesting 
research direction for the future.

\textit{Bifurcation and Perturbations.}
To every temperature level $T_i$, corresponds a set of 
effective codevectors $\cbra{\mu_j}_{j=1}^{K_i}$, 
which consist of the different solutions 
of the optimization problem (\ref{eq:F}) at $T_i$.
Bifurcation, at $T_i$, is detected by maintaining a pair of perturbed 
codevectors $\cbra{\mu_j+\delta, \mu_j-\delta}$ 
for each effective codevector $\mu_j$ generated at $T_{i-1}$,
i.e. for $j=1\ldots,K_{i-1}$.
Using arguments from variational calculus \cite{rose1998deterministic},
it is easy to see that, upon convegence, 
the perturbed codevectors will merge if a critical 
temperature has not been reached, and will get separated otherwise. 
In case of a merge, one of the perturbed codevectors
is removed from the model. 
Therefore, the cardinality of the model is at most doubled at 
every temperature level.
For classification, a perturbed codevector for each distinct class is generated.
%

\textit{Regularization.}
Merging is detected by the condition $d_\phi(\mu_j,\mu_i)<\epsilon_n$,
where $\epsilon_n$ is a design parameter
that acts as a regularization term for the model. 
Large values for $\epsilon_n$ (compared to the support of the data $X$)
lead to fewer effective codevectors, while small $\epsilon_n$ 
values lead to a fast growth in the model size, 
which is connected to overfitting.
It is observed that, for practical convergence, 
the perturbation noise $\delta$ is best to not exceed $\epsilon_n$.
An additional regularization mechanism that comes as a natural 
consequence of the stochastic approximation learning rule,
is the detection of idle codevectors.
To see that, notice that the sequence $\rho_i(n)$ resembles an 
approximation of the probability $p(\mu_i,c_{\mu_i})$.
In the updates (\ref{eq:oda_learning1}), (\ref{eq:oda_learning2}),
$\rho_i(n)$ 
becomes negligible ($\rho_i(n)<\epsilon_r$) 
if not updated by any nearby observed data,
which is a natural criterion for removing the codevector $\mu_i$.
This happens if all observed data samples $x_n$
are largely dissimilar to $\mu_i$. 
In classification, because of the choice of $d^c$ in (\ref{eq:dc}),
codevectors $\mu_i$ that are not assigned the same class as the data in 
their vicinity, will end up to be removed, as well.
The threshold $\epsilon_r$ is a parameter that usually takes 
values near zero.

\textit{Complexity.}
The worst case complexity of Algorithm \ref{alg:ODA} behaves as 
$O(\sigma_{max} N K_{max}^2 d),$
where:
%
\begin{itemize}
\item $N$ is an upper bound of the number of data samples observed,
	which should be large enough to overestimate
	the iterations needed for convergence;
\item $d$ is the dimension of the input vectors, i.e., $x\in\mathbb{R}^d$;
\item $K_{max}$ is the maximum number of codevectors allowed; 
\item $\sigma_{max} = \cbra{\sigma_1,\sigma_2,\ldots,\sigma_{K_{max}}}$,
	where $\sigma_i$ is the number of temperature values in our 
	temperature schedule that lie between two critical temperatures
	$T_i$ and $T_{i+1}$, with the understanding that at $T_i$ 
	there are $i$ distinct effective codevectors present. 
	Here we have assumed that $K_{max}$ is achievable within 
	our temperature schedule.  
\end{itemize}

\textit{Fine-Tuning.}
In practice, 
because the convergence to the Bayes decision surface 
comes at the limit $(K,T)\rightarrow (\infty,0)$, 
a fine-tuning mechanism should be designed to run on top of 
the proposed algorithm after $T_{min}$.
This can be either an LVQ algorithm (Section \ref{sSec:LVQ})
or some other local model. 
%


\begin{algorithm}[hb!]
\caption{Online Deterministic Annealing}
\label{alg:ODA}
\begin{algorithmic}
\STATE Select Bregman divergence $d_\phi$
\STATE Set temperature schedule: $T_{max}$, $T_{min}$, $\gamma$
\STATE Decide maximum number of codevectors $K_{max}$ 
\STATE Set convergence parameters: $\cbra{\alpha_n}$, 
	$\epsilon_c$, $\epsilon_n$, $\epsilon_r$, $\delta$ 
\STATE Select initial configuration 
	$ \cbra{\mu^i}: c_{\mu^i} = c,~ \forall c \in\mathcal{C} $
\STATE Initialize: $K = 1$, $T = T_{max}$
\STATE Initialize:	$p(\mu^i) = 1$, $\sigma(\mu^i) = \mu^i p(\mu_i)$, 
	$\forall i$ 	
\WHILE{$K<K_{max}$ \textbf{and} $T>T_{min}$}
\STATE Perturb  
	$\mu^i \gets  
		\cbra{\mu^i+\delta, \mu^i-\delta}$, $\forall i$ 
\STATE Increment $K\gets 2K$
\STATE Update $p(\mu^i)$, $\sigma(\mu^i)\gets\mu^i p(\mu^i)$, $\forall i$ 
\STATE Set $n \gets 0$
\REPEAT 
%
\STATE Observe data point $x$ and class label $c$
\FOR{$i = 1,\ldots, K$} 
\STATE Compute membership $s^i = \mathds{1}_{\sbra{c_{\mu^i}=c}}$ 
\STATE Update: 
\vskip -0.3in
	\begin{align*}
	p(\mu^i|x) &\gets \frac{p(\mu^i) e^{-\frac{d_\phi(x,\mu^i)}{T}}}
			{\sum_i p(\mu^i) e^{-\frac{d_\phi(x,\mu^i)}{T}}} \\
	p(\mu^i) &\gets p(\mu^i) + \alpha_n \sbra{s^i p(\mu^i|x) - p(\mu^i)} \\
	\sigma(\mu^i) &\gets \sigma(\mu^i) + 
		\alpha_n \sbra{s^i x p(\mu^i|x) - \sigma(\mu^i)} \\
	\mu^i &\gets \frac{\sigma(\mu^i)}{p(\mu^i)}	
	\end{align*}
\vspace{-1.5em}
\STATE Increment $n\gets n+1$
\ENDFOR
\UNTIL $d_\phi(\mu^i_n,\mu^i_{n-1})<\epsilon_c$, $\forall i $
\STATE Keep effective codevectors: \\ 
	\quad\quad discard $\mu^i$ if $d_\phi(\mu^j,\mu^i)<\epsilon_n$, 
	$\forall i,j,i\neq j$
\STATE Remove idle codevectors: \\ 
	\quad\quad 	discard $\mu^i$ if $p(\mu^i)<\epsilon_r$, $\forall i$
\STATE Update $K$, $p(\mu^i)$, $\sigma(\mu^i)$, $\forall i$
\STATE Lower temperature $T \gets \gamma T$ 
\ENDWHILE
\end{algorithmic}
\end{algorithm}

\section{Experimental Evaluation and Discussion}
\label{Sec:Results}

We illustrate the properties and evaluate the performance 
of the proposed algorithm in widely used artificial and real datasets
for clustering and classification%
\footnote{Code and Reproducibility: The source code is publicly available online
at \textit{https://github.com/MavridisChristos/OnlineDeterministicAnnealing}.}.

\begin{figure}[t]
\centering
\begin{subfigure}[b]{0.48\textwidth}
\centering
\includegraphics[trim=80 50 60 55,clip,width=0.24\textwidth]{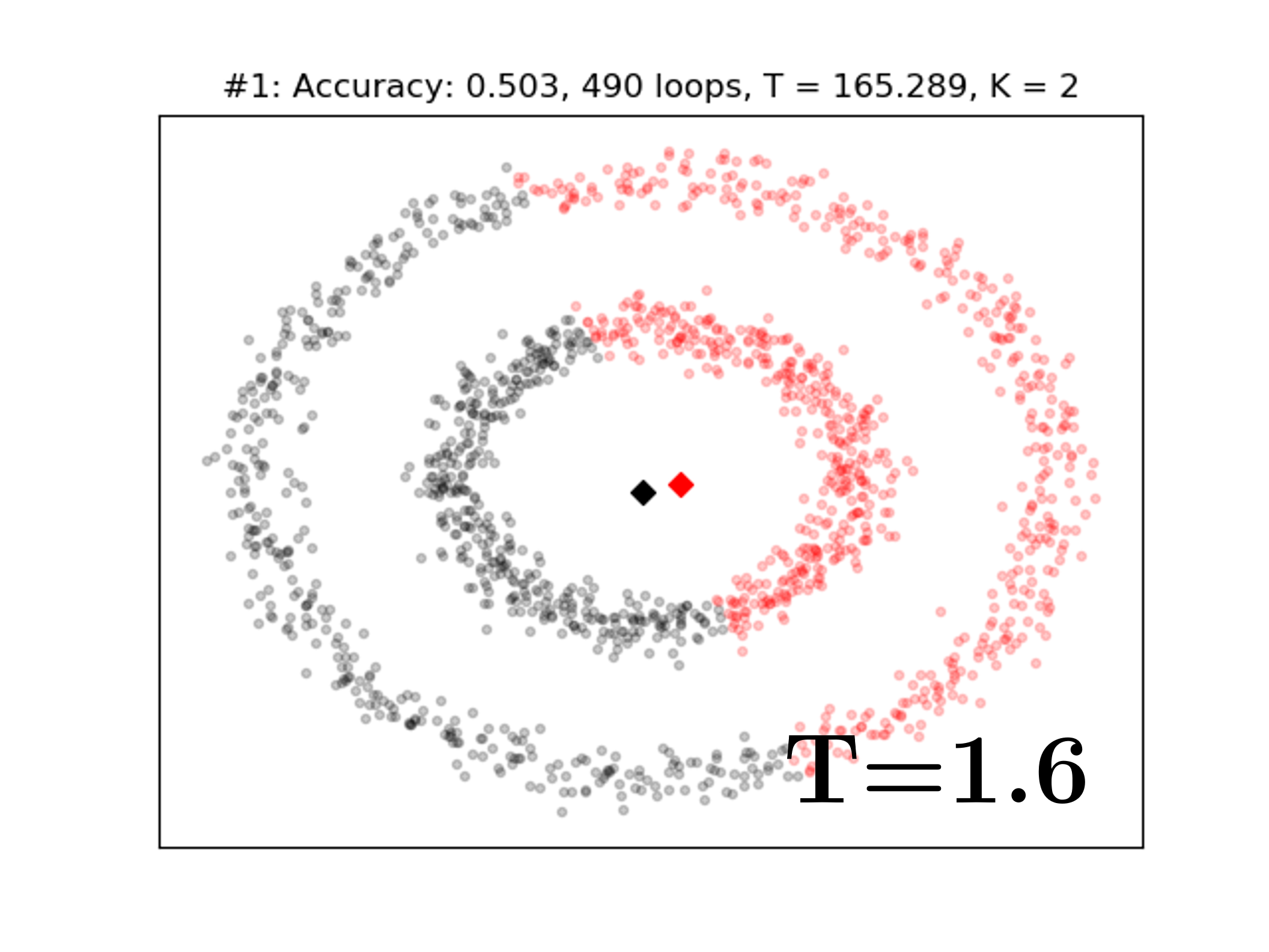}
\includegraphics[trim=80 50 60 55,clip,width=0.24\textwidth]{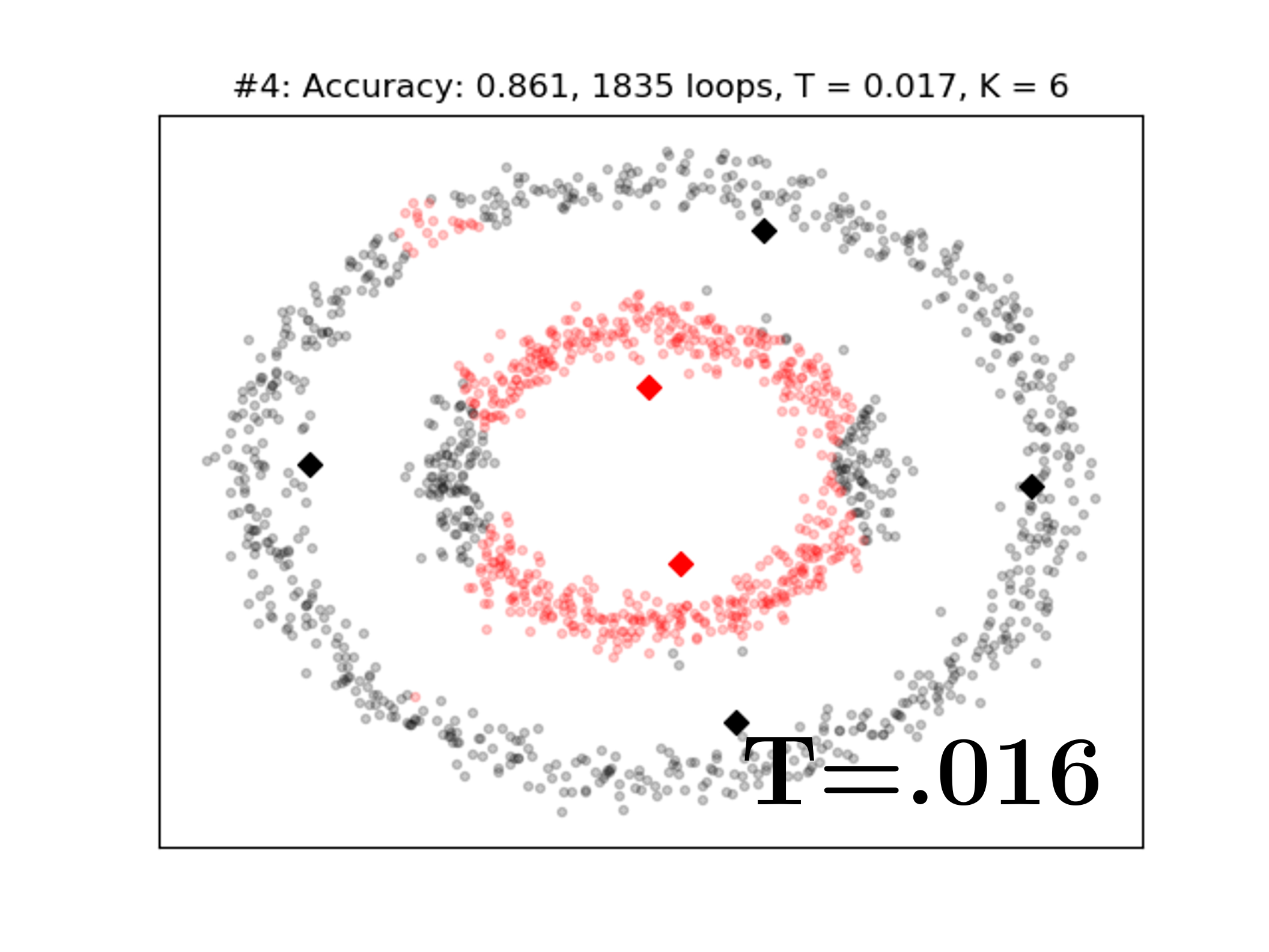}
\includegraphics[trim=80 50 60 55,clip,width=0.24\textwidth]{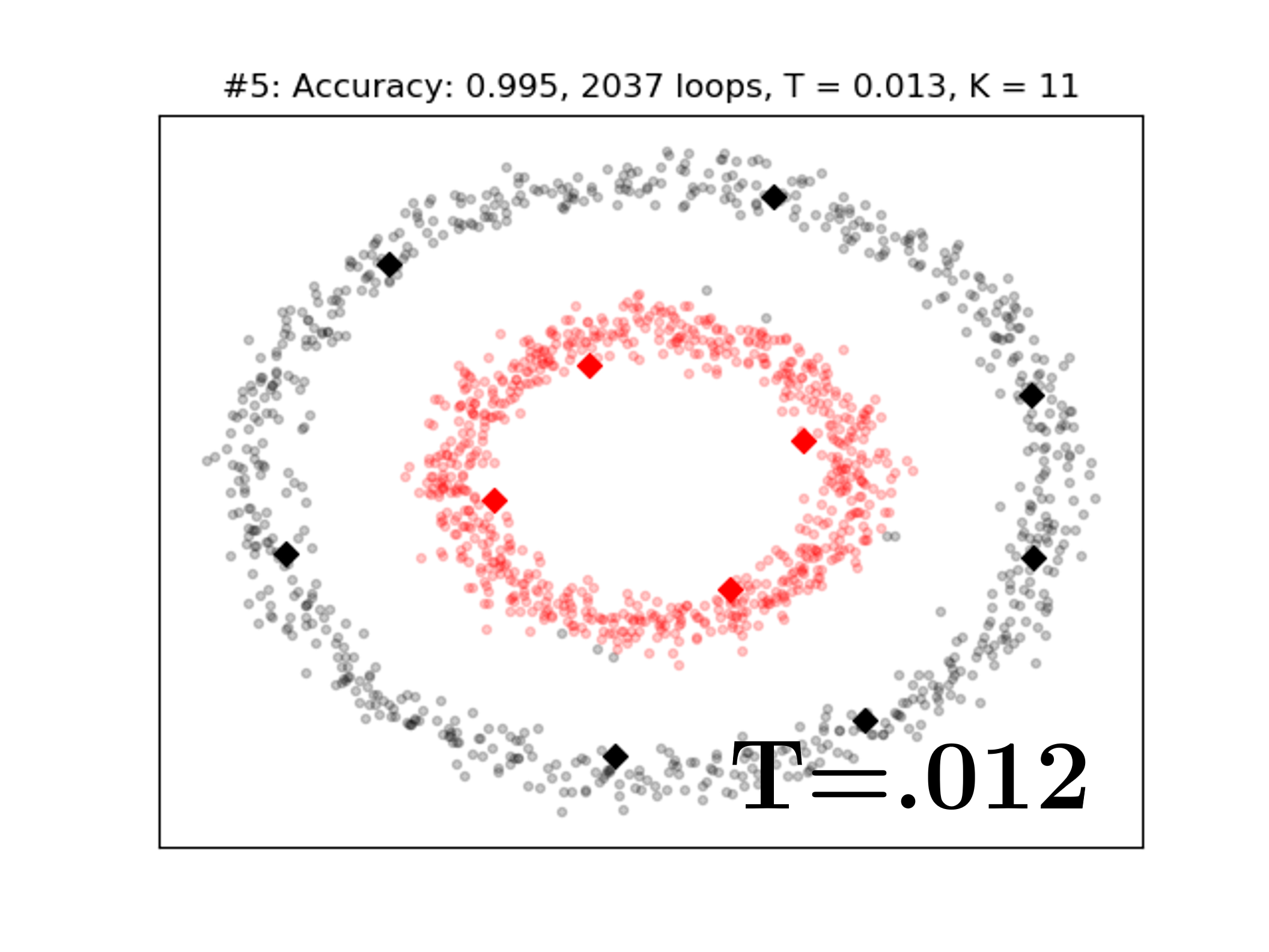}
\includegraphics[trim=80 50 60 55,clip,width=0.24\textwidth]{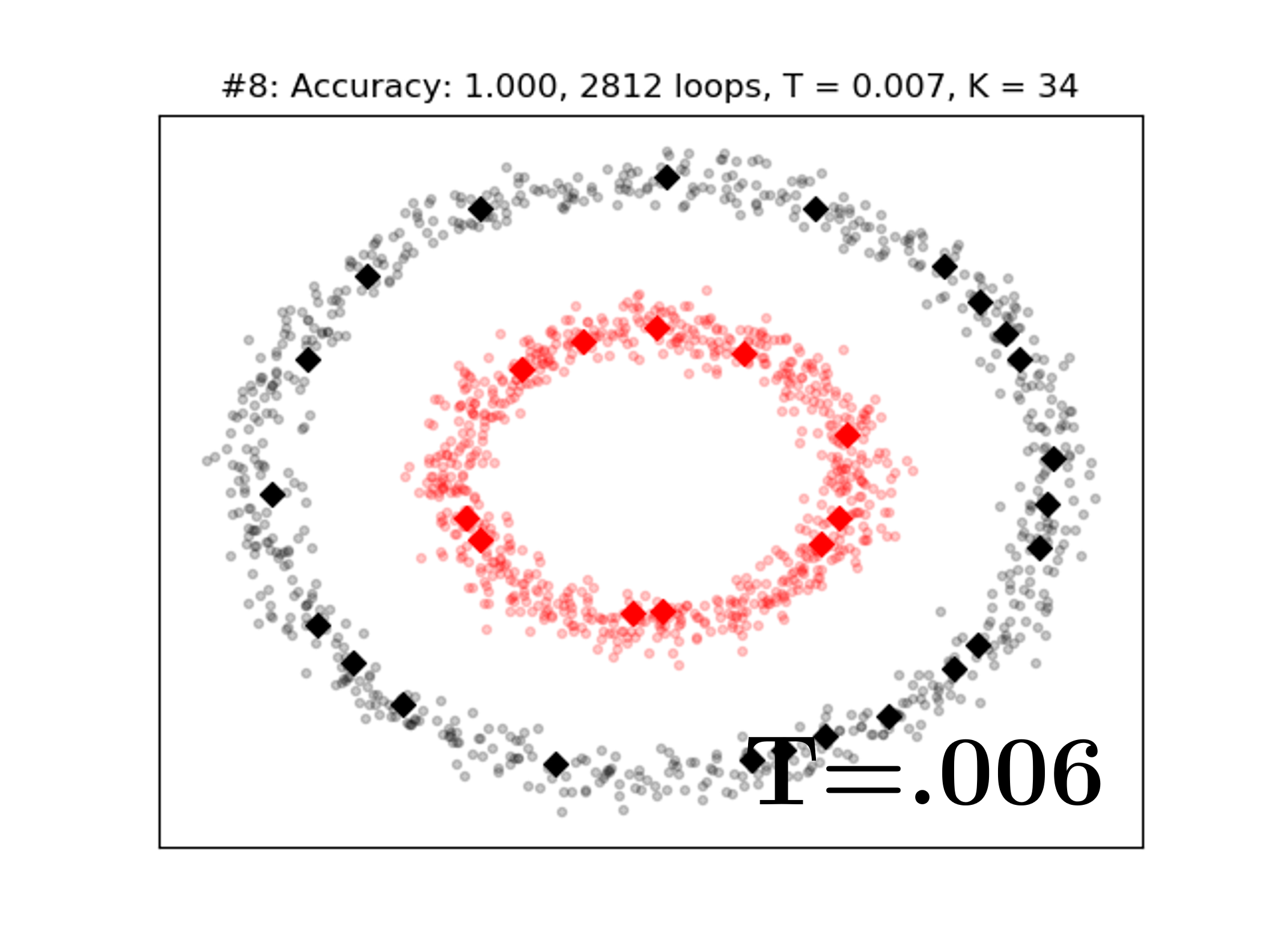}
\caption{Concentric circles.}
\label{sfig:illustration_circles}
\end{subfigure}
\begin{subfigure}[b]{0.48\textwidth}
\centering
\includegraphics[trim=80 50 60 55,clip,width=0.24\textwidth]{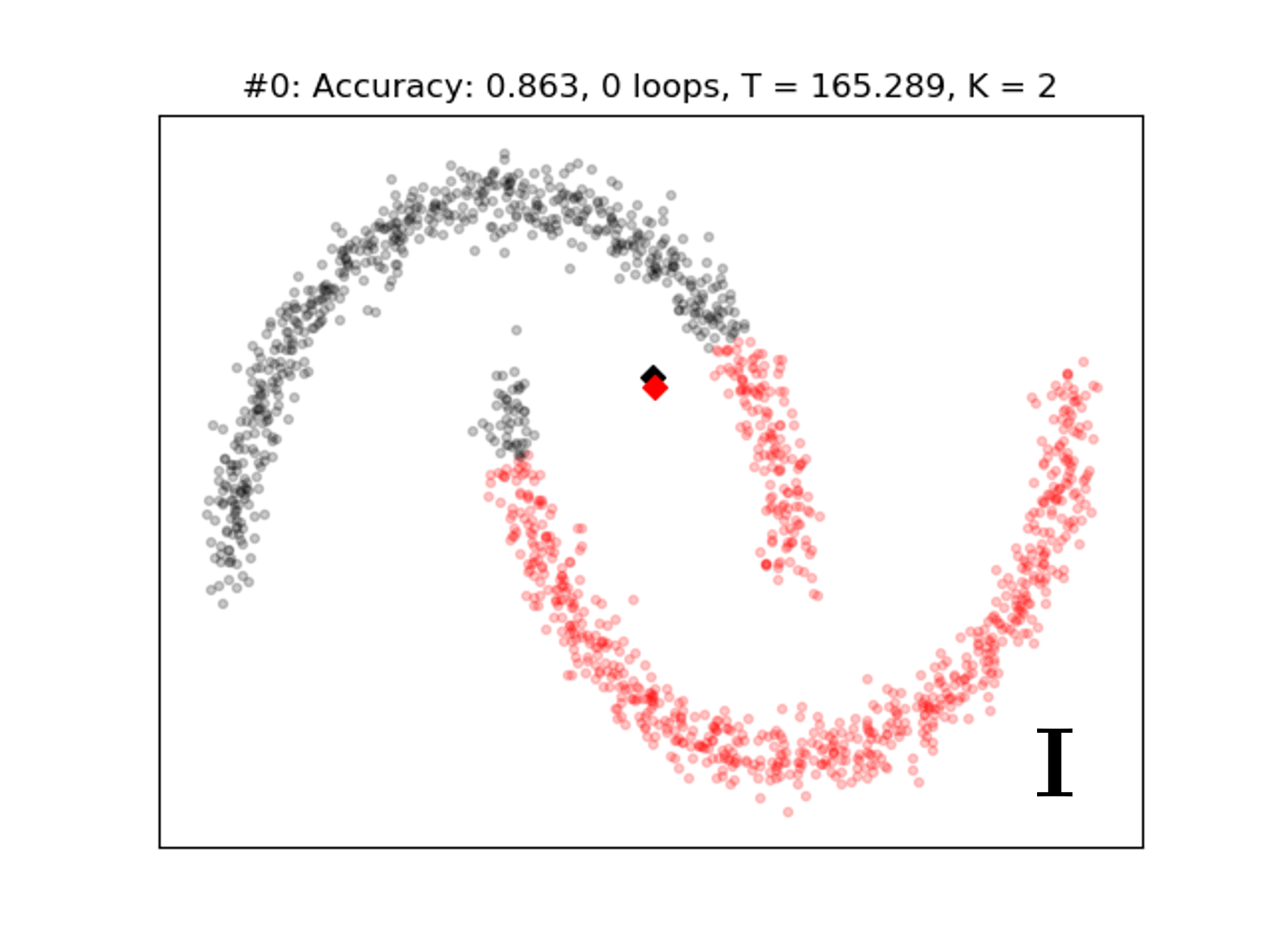}
\includegraphics[trim=80 50 60 55,clip,width=0.24\textwidth]{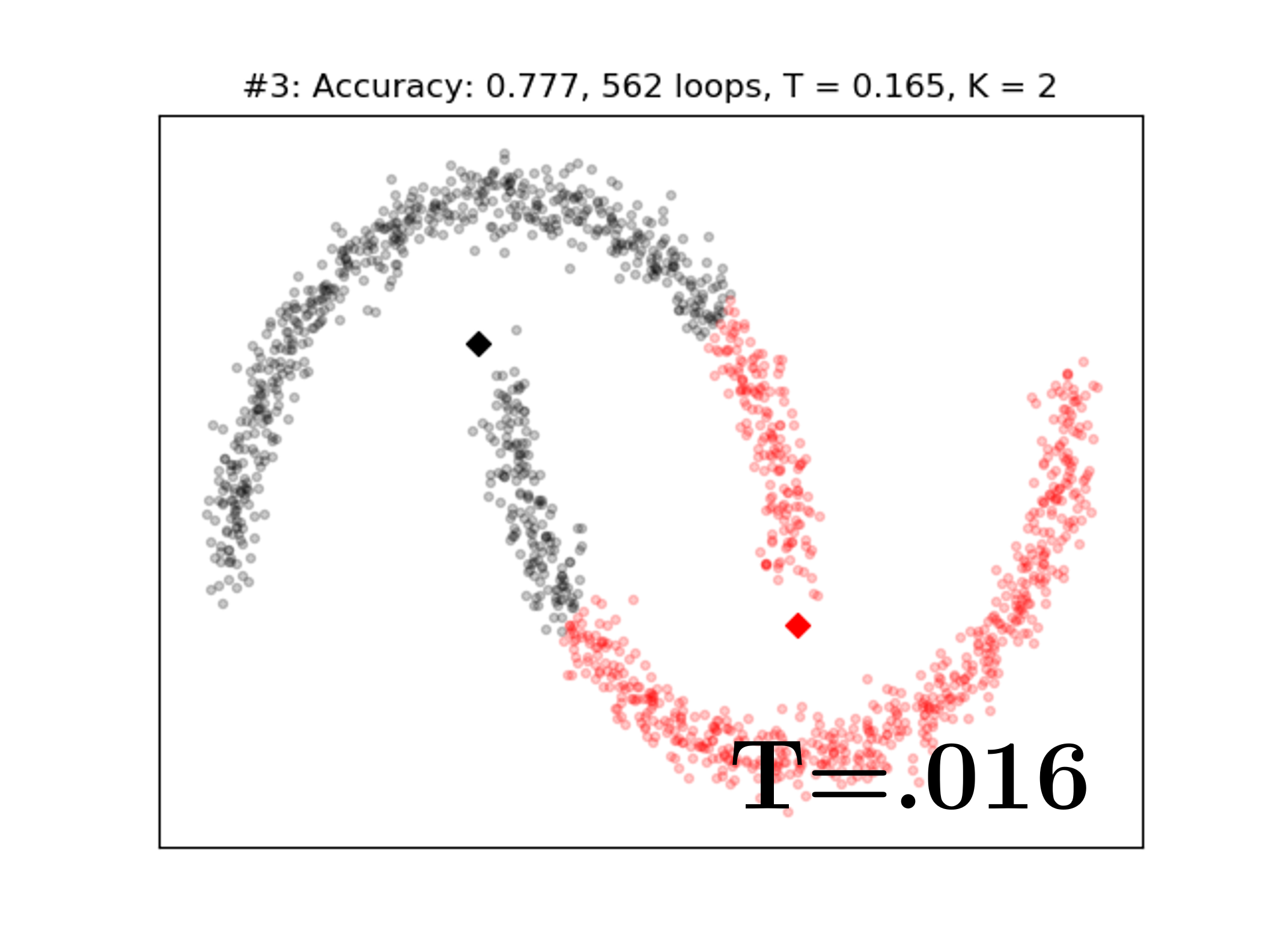}
\includegraphics[trim=80 50 60 55,clip,width=0.24\textwidth]{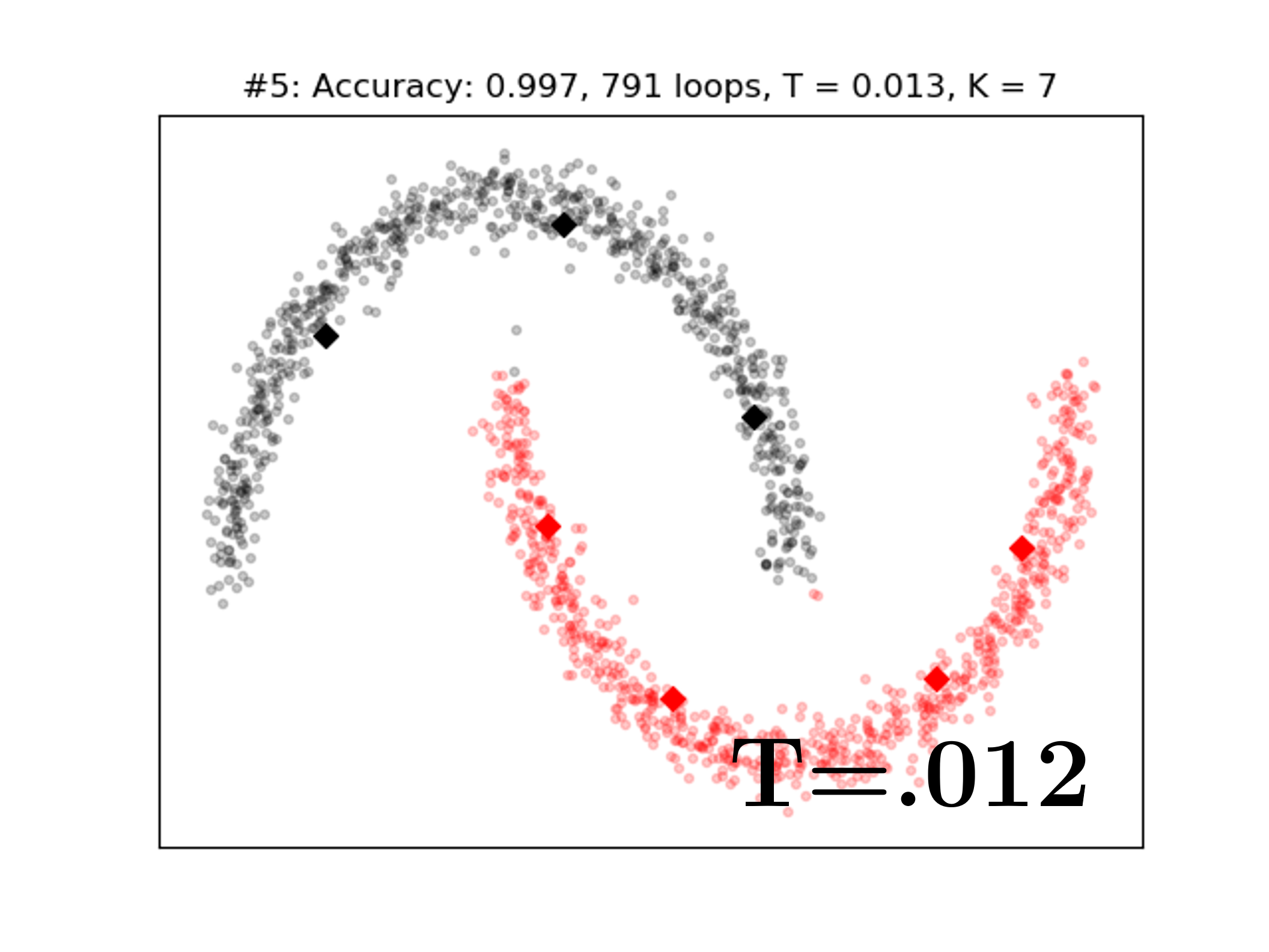}
\includegraphics[trim=80 50 60 55,clip,width=0.24\textwidth]{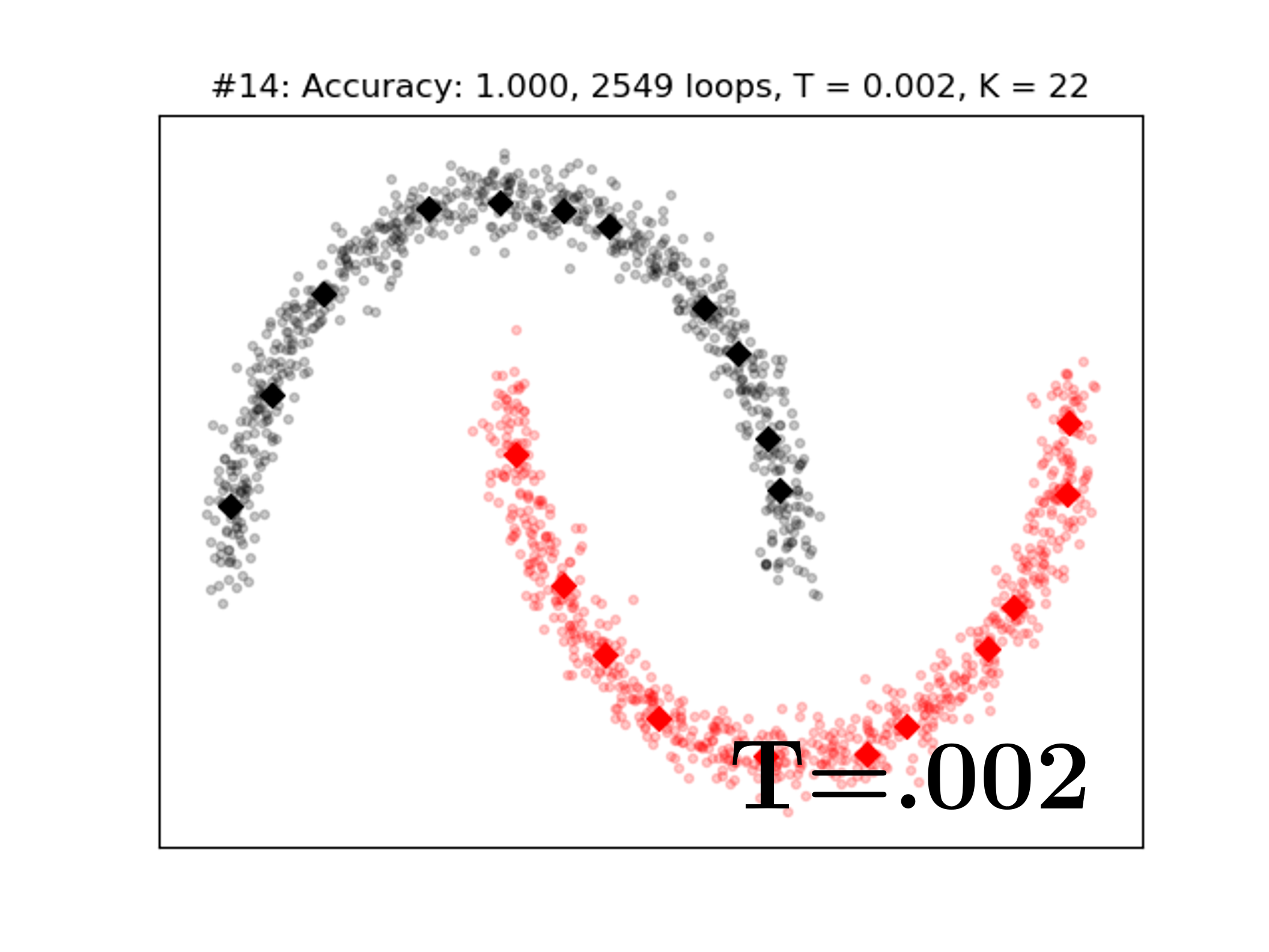}
\caption{Half moons.}
\label{sfig:illustration_moons}
\end{subfigure}
\begin{subfigure}[b]{0.48\textwidth}
\centering
\includegraphics[trim=80 50 60 55,clip,width=0.24\textwidth]{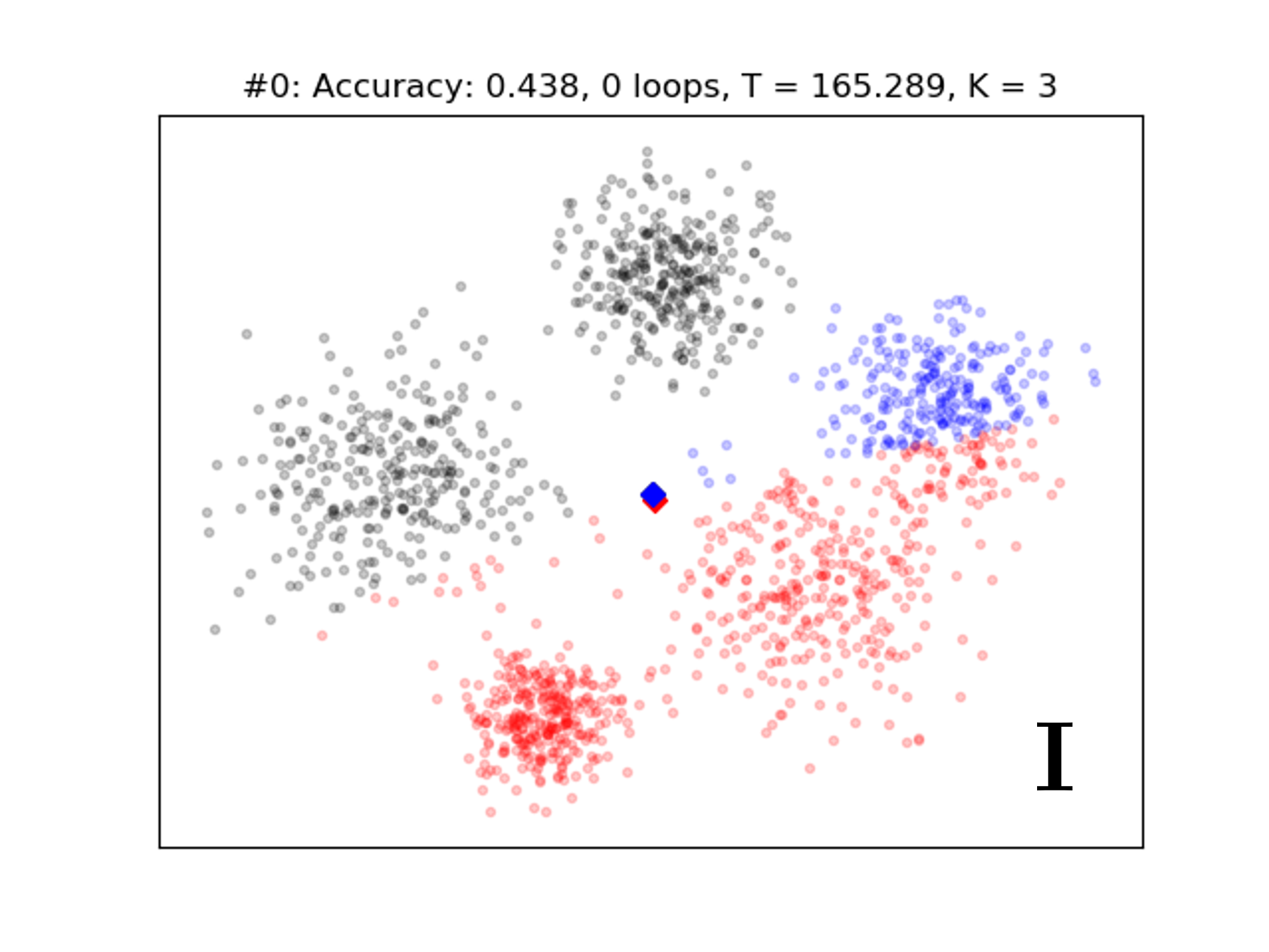}
\includegraphics[trim=80 50 60 55,clip,width=0.24\textwidth]{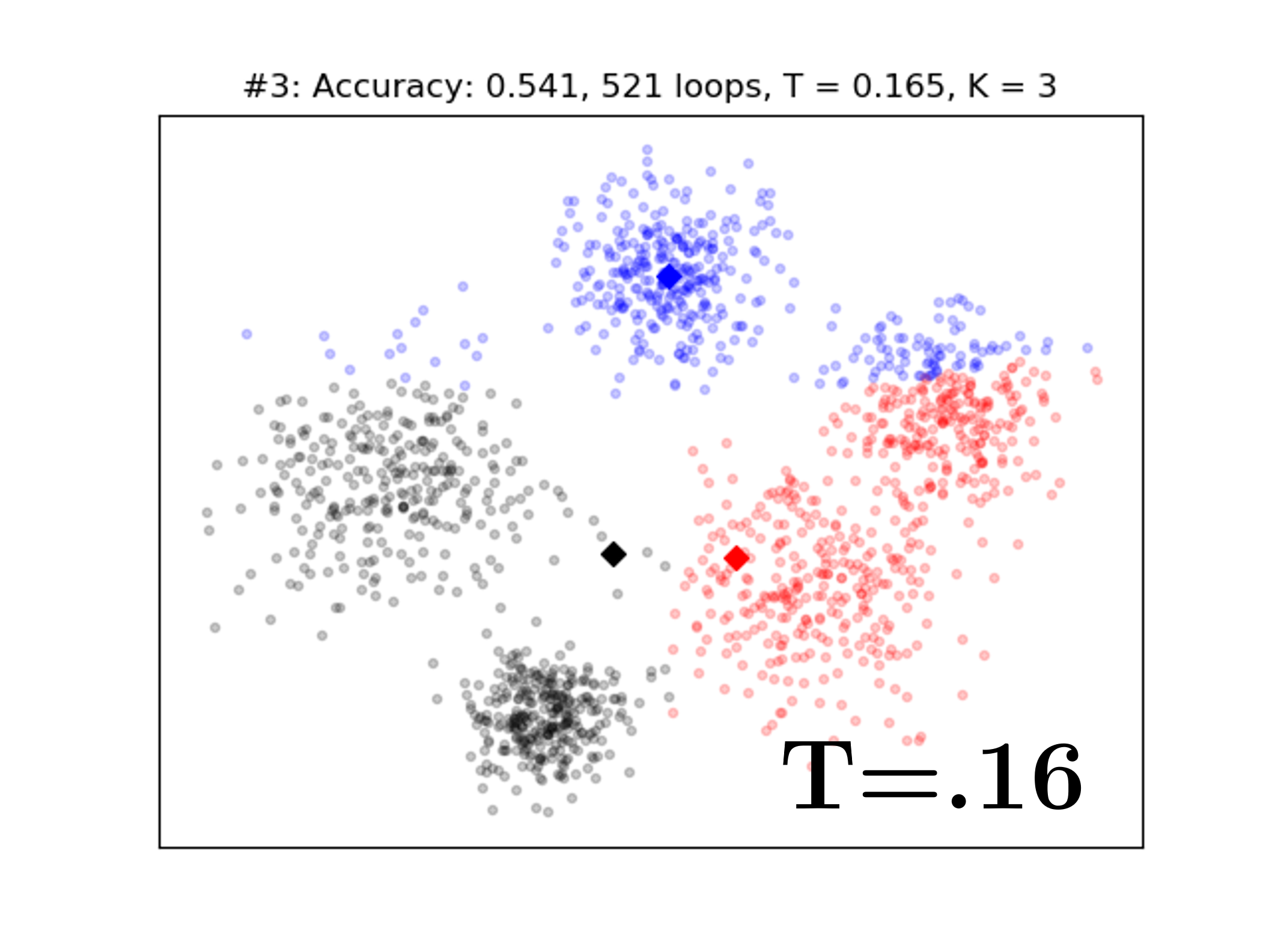}
\includegraphics[trim=80 50 60 55,clip,width=0.24\textwidth]{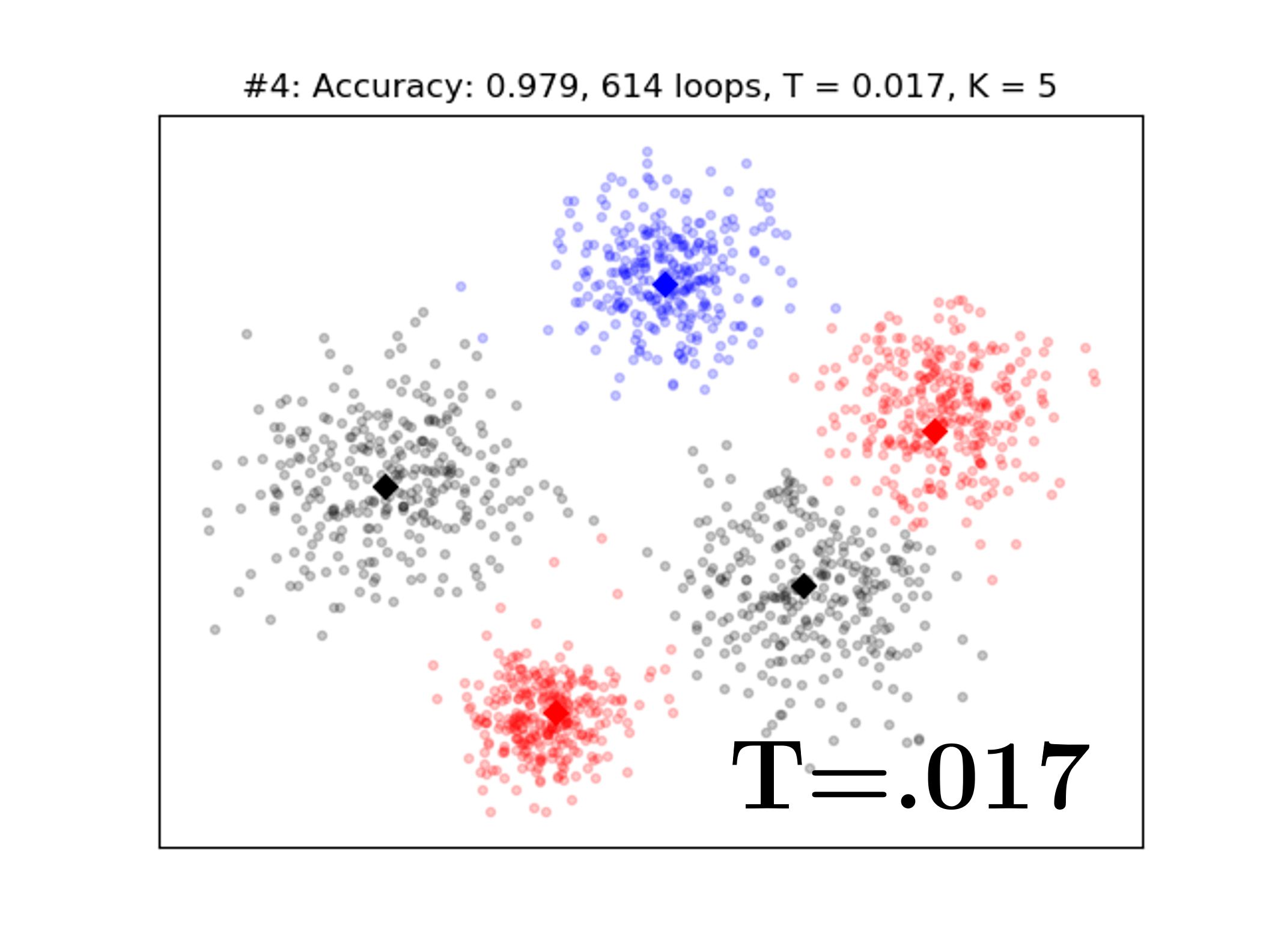}
\includegraphics[trim=80 50 60 55,clip,width=0.24\textwidth]{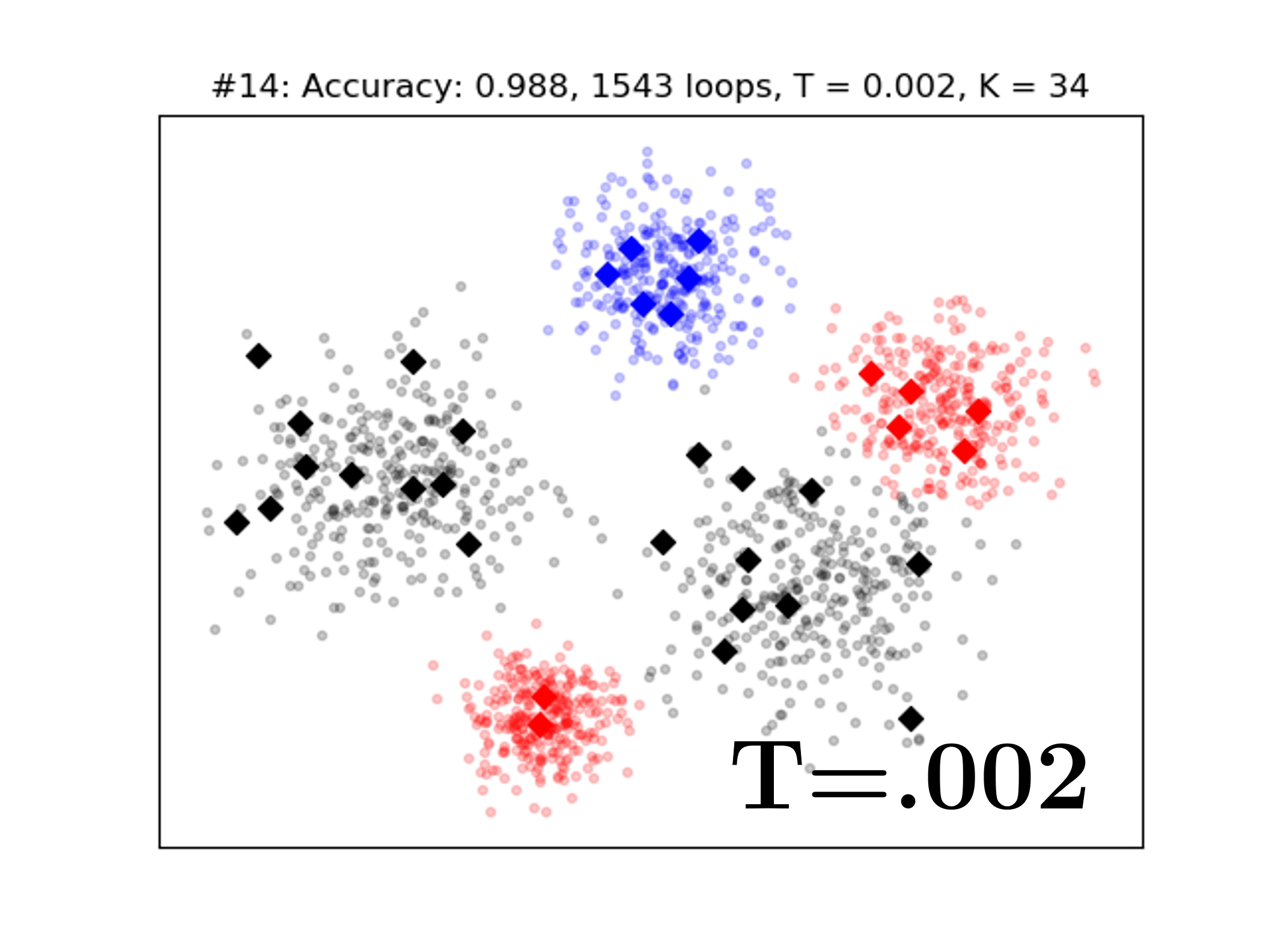}
\caption{Gaussians.}
\label{sfig:illustration_blobs}
\end{subfigure}
\begin{subfigure}[b]{0.48\textwidth}
\centering
\includegraphics[trim=70 45 60 55,clip,width=0.24\textwidth]{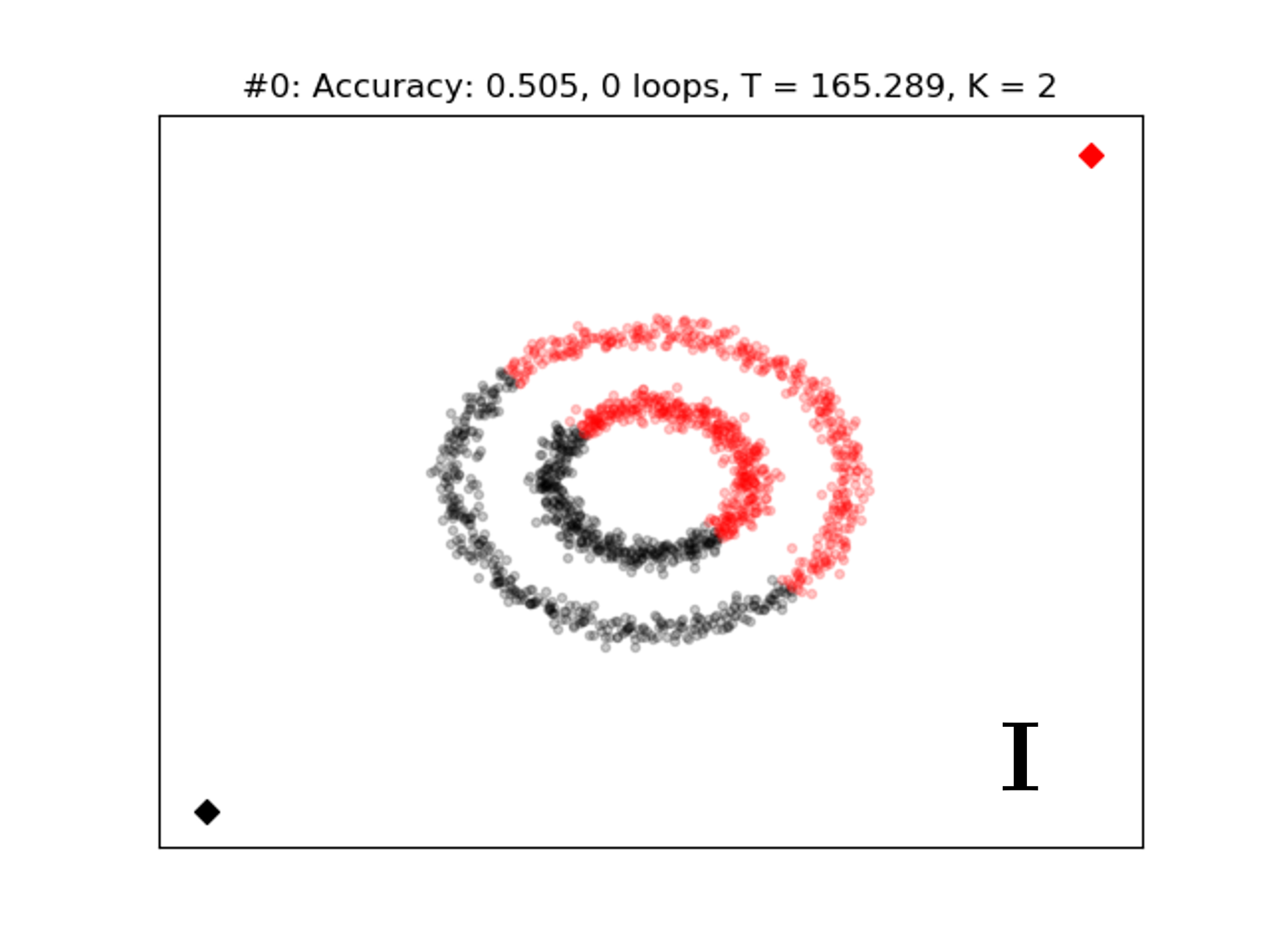}
\includegraphics[trim=70 45 60 55,clip,width=0.24\textwidth]{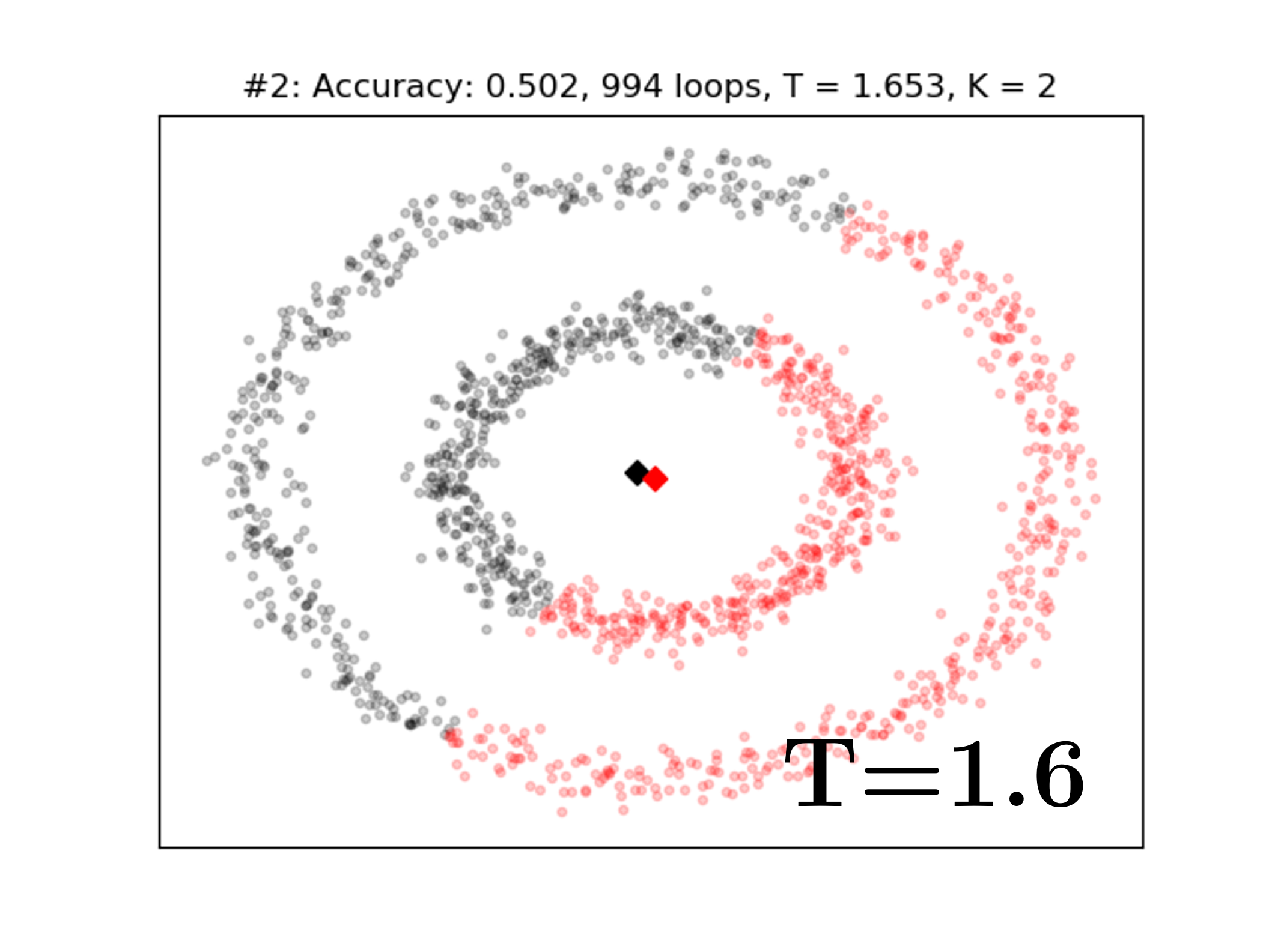}
\includegraphics[trim=70 45 60 55,clip,width=0.24\textwidth]{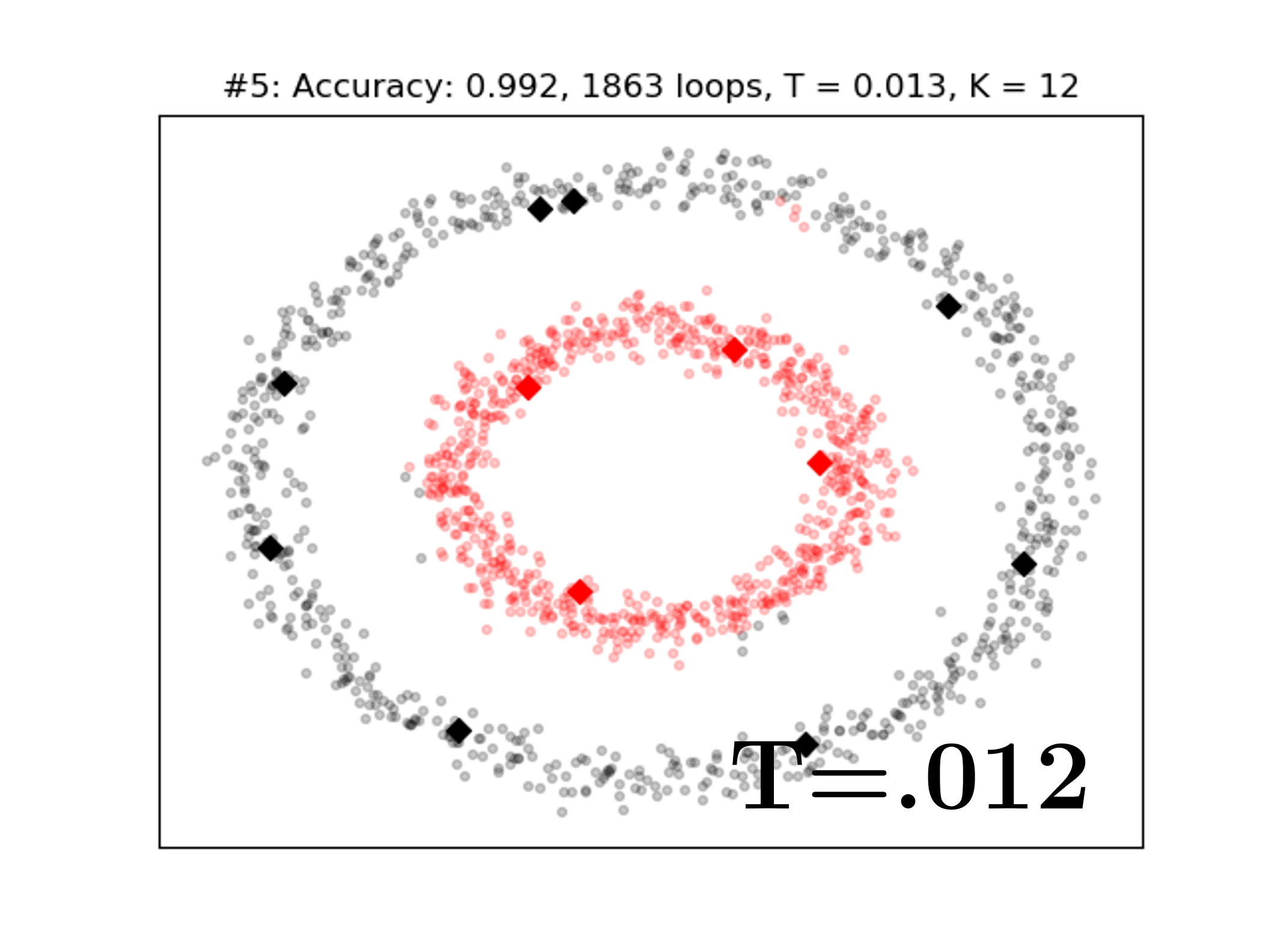}
\includegraphics[trim=70 45 60 55,clip,width=0.24\textwidth]{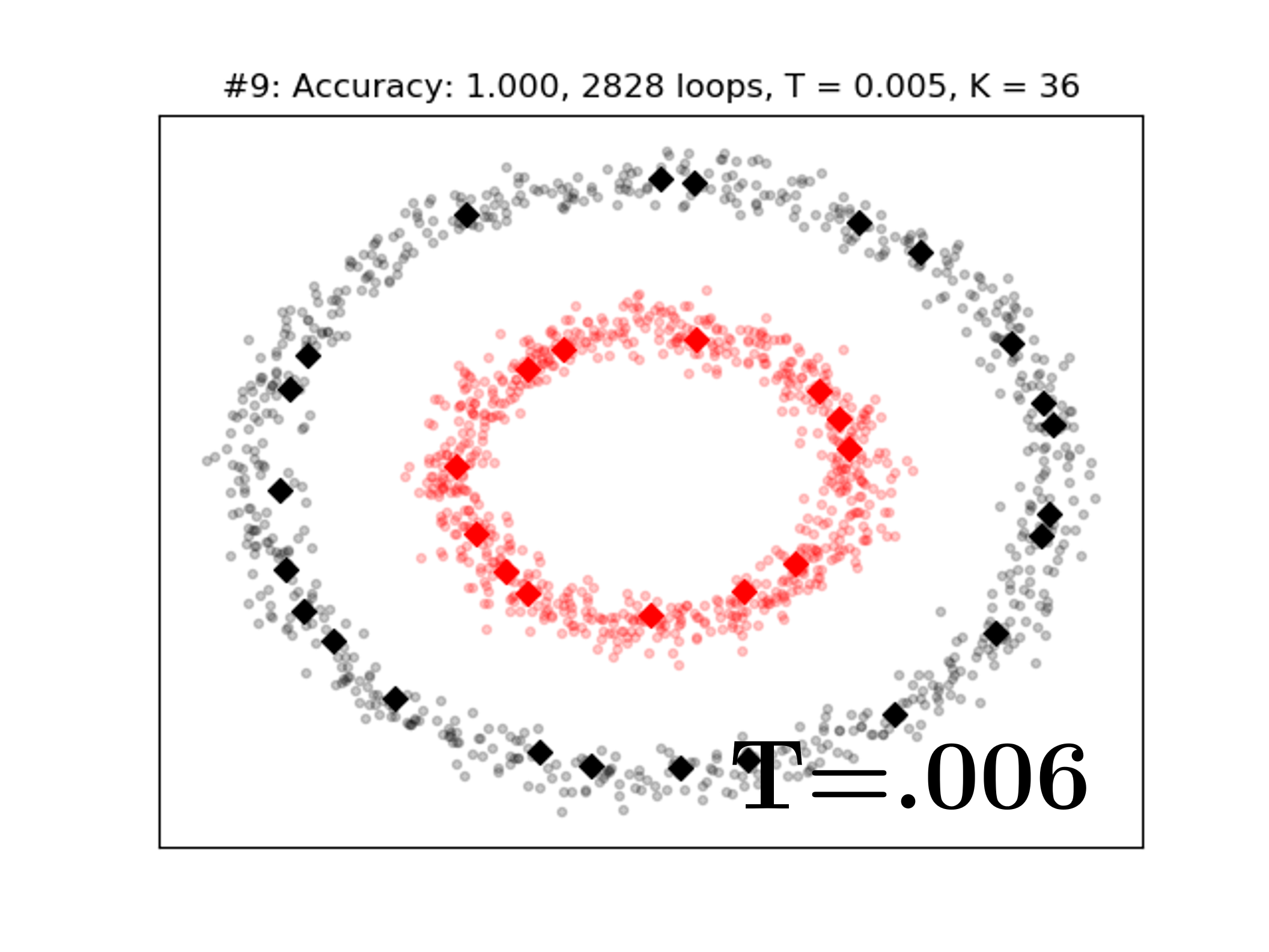}
\caption{Poor initial conditions.}
\label{sfig:illustration_initial}
\end{subfigure}
\caption{$(a)$-$(c)$Illustration of the evolution of Alg. \ref{alg:ODA} 
	for decreasing temperature $T$ in binary classification in 2D.
	$(d)$ Showcasing robustness with respect to bad initial conditions.}
\label{fig:illustration}
\vspace{-1.0em}
\end{figure}
%

\subsection{Toy Examples}

We first showcase how Alg. \ref{alg:ODA} 
works in three simple, but illustrative, classification problems
in two dimensions (Fig. \ref{fig:illustration}).
The first two are binary classification problems with the
underlying class distributions shaped as
concentric circles (Fig. \ref{sfig:illustration_circles}),
and half moons (Fig. \ref{sfig:illustration_moons}),
respectively.
The third is a multi-class classification problem 
with Gaussian mixture class distributions 
(Fig. \ref{sfig:illustration_blobs}).
All datasets consist of $1500$ samples.
Since the objective is to give a geometric illustration 
of how the algorithm works in the two-dimensional plane, 
the Euclidean distance is used. 
The algorithm starts at high temperature 
with a single codevector for each class. 
As the temperature coefficient gradually decreases
(Fig. \ref{fig:illustration}, from left to right), 
the number of codevectors progressively increases.
The accuracy of the algorithm typically increases as well.
As the temperature goes to zero, the complexity of the model, 
i.e. the number of codevectors,
rapidly increases (Fig. \ref{fig:illustration}, rightmost pictures).
This may, or may not, translate to a corresponding performance boost. 
A single parameter --the temperature $T$-- 
offers online control on this complexity-accuracy trade-off.
Finally, Fig. \ref{sfig:illustration_initial} showcases the
robustness of the proposed algorithm with respect to 
the initial configuration.
Here the codevectors are poorly initialized 
outside the support of the data, which is not assumed known a priori
(e.g. online observations of unknown domain).
In this example the LVQ algorithm has been shown to fail \cite{aLaVigna_LVQconvergence_1990}.
In contrast, the entropy term $H$ in the optimization objective
of Alg. \ref{alg:ODA}, allows for the 
online adaptation to the domain of the dataset 
and helps to prevent poor local minima.


\subsection{Real Datasets}

\begin{figure}[t]
\centering
\begin{subfigure}[b]{0.24\textwidth}
\centering
\includegraphics[trim=0 0 0 0,clip,width=\textwidth]{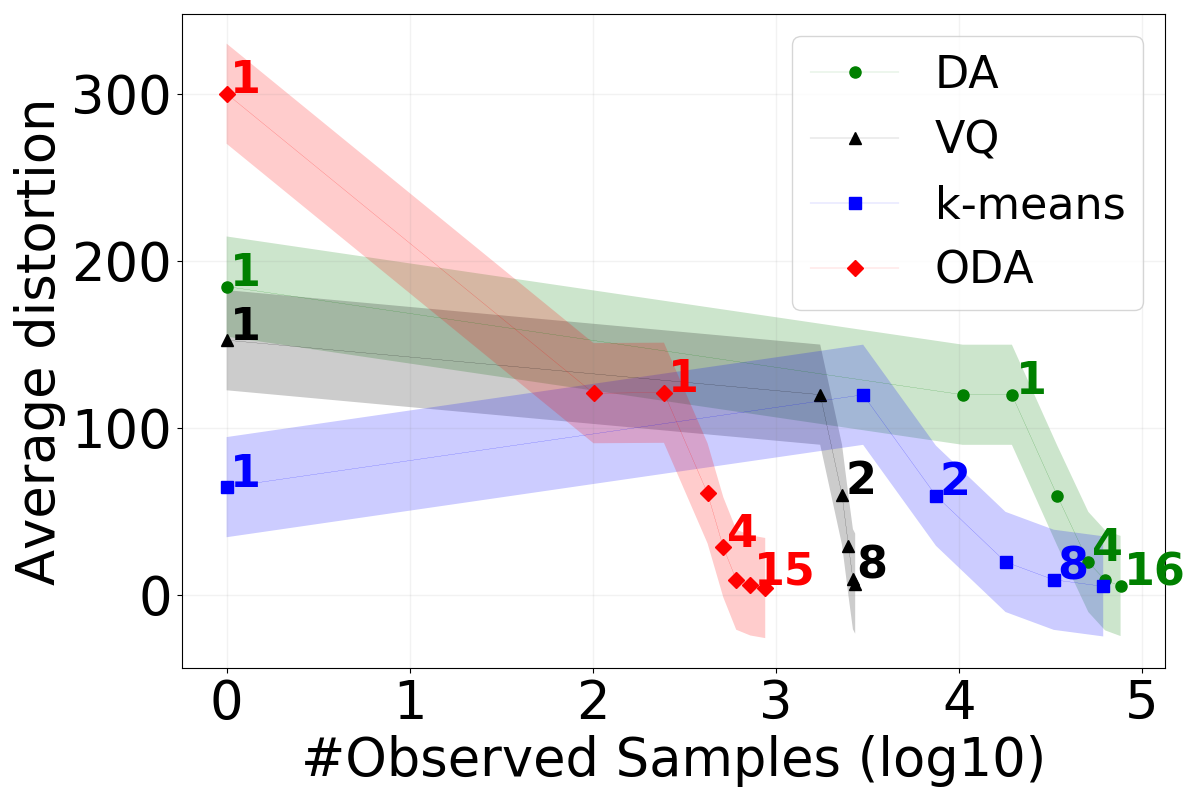}
\caption{Gaussians.}
\label{sfig:clustering_blobs}
\end{subfigure}
\hfill
\begin{subfigure}[b]{0.24\textwidth}
\centering
\includegraphics[trim=0 0 0 0,clip,width=\textwidth]{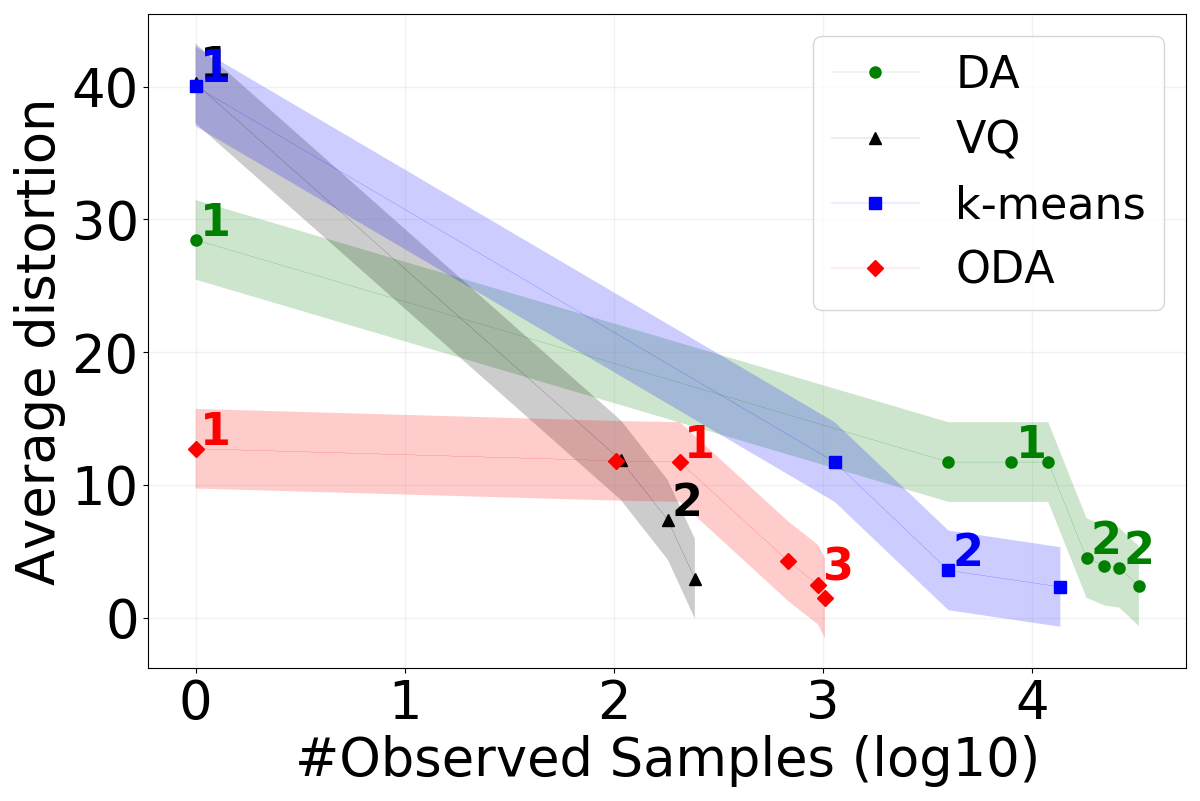}
\caption{WBCD.}
\label{sfig:clustering_wbcd}
\end{subfigure}
\hfill
\begin{subfigure}[b]{0.24\textwidth}
\centering
\includegraphics[trim=0 0 0 0,clip,width=\textwidth]{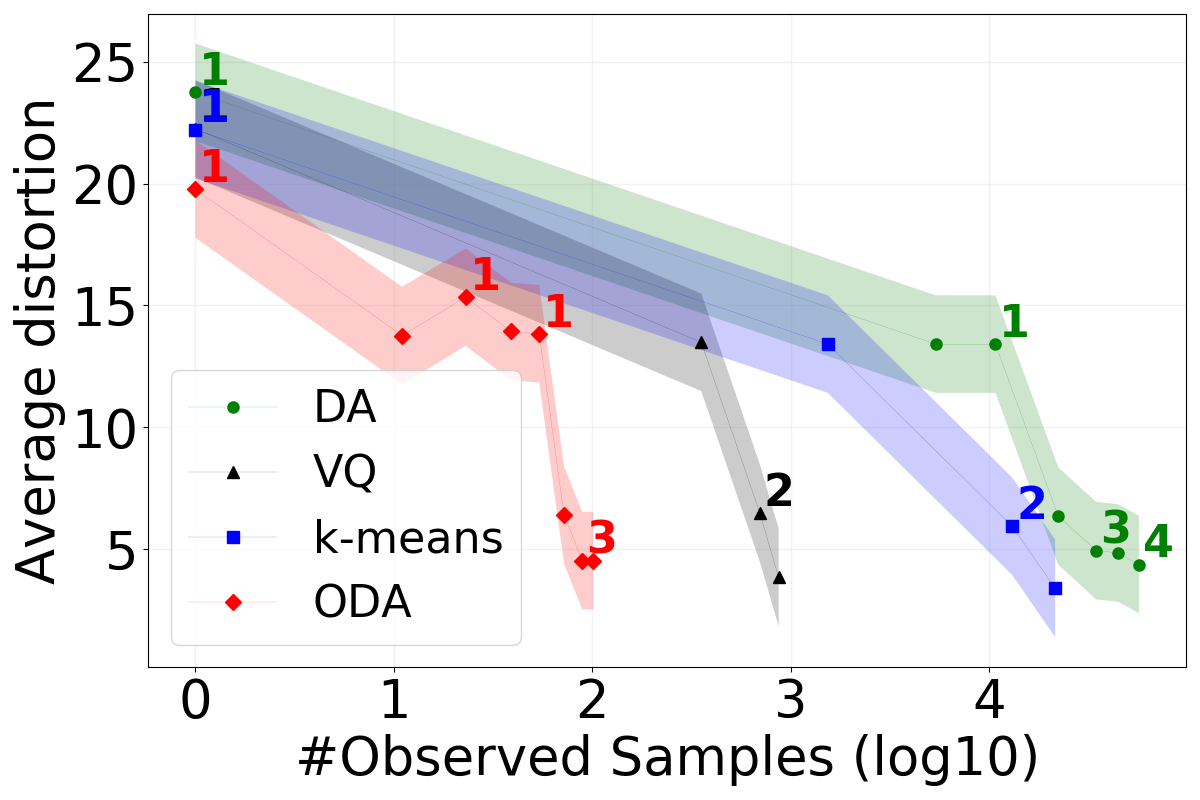}
\caption{PIMA.}
\label{sfig:clustering_pima}
\end{subfigure}
\hfill
\begin{subfigure}[b]{0.24\textwidth}
\centering
\includegraphics[trim=0 0 0 0,clip,width=\textwidth]{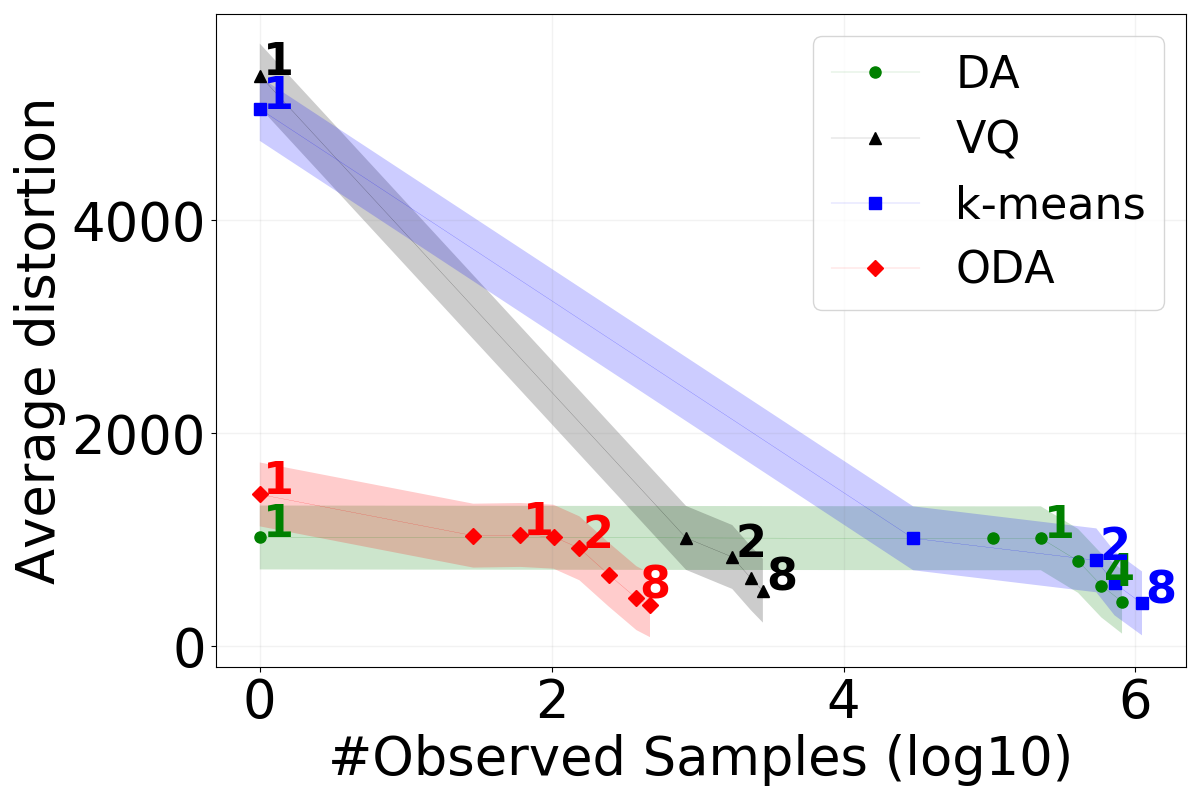}
\caption{Adult.}
\label{sfig:clustering_adult}
\end{subfigure}
\caption{Algorithm comparison for clustering.}
\label{fig:clustering}
\end{figure}

\textit{Clustering.}
For clustering, we consider the following datasets: 
(a) the dataset of Fig. \ref{sfig:illustration_blobs} (Gaussians),
(b) the WBCD dataset \cite{dua2019uci},
(c) the PIMA dataset \cite{smith1988using}, and
(d) the Adult dataset%
\footnote{$15000$ samples randomly selected. 
Non-numerical features removed.}%
\cite{dua2019uci}.
In Fig. \ref{fig:clustering}, 
we compare Alg. \ref{alg:ODA}
with the online sVQ algorithm (Def. \ref{def:sVQ}),
and two offline algorithms, namely
$k$-means \cite{bottou1995convergence},
and the original 
deterministic annealing (DA) algorithm \cite{rose1998deterministic}.
The algorithms are compared in terms of the minimum average distortion 
achieved, as a function of the number of samples they observed,
and 
the number of clusters they used (floating numbers inside the figures).
The Euclidean distance is used for fair comparison.
Since there is no criterion to decide the number of clusters $K$
for $k$-means and sVQ, we run them sequentially for the $K$ values
estimated by DA, and add up the computational time.
All algorithms are able to achieve 
comparable average distortion values given good initial conditions 
and appropriate size $K$. 
Therefore, the progressive estimation of $K$, as well as the robustness 
with respect to the initial 
conditions, are key features of both annealing algorithms, i.e., 
DA and ODA (Alg. \ref{alg:ODA}). 
Compared to the offline algorithms, i.e., $k$-means and DA, 
ODA and sVQ achieve practical convergence
with significantly smaller number of 
observations, which corresponds to 
reduced computational time, as argued above.
Notice the substantial difference in running time between the original DA algorithm 
and the proposed ODA algorithm in Fig. \ref{fig:rtime}. 
Compared to the online sVQ (and LVQ), 
the probabilistic approach of ODA introduces additional computational cost:  
all neurons are now updated in every iteration, 
instead of only the winner neuron.
However, the updates can still be computed fast when using
Bregman divergences (Theorem \ref{thm:bregman_in_DA}), 
and the aforementioned benefits of the annealing nature of ODA, 
outweigh this additional cost in many real-life problems.


%
\begin{figure}[t]
\centering
\begin{subfigure}[b]{0.24\textwidth}
\centering
\includegraphics[trim=0 0 0 0,clip,width=\textwidth]{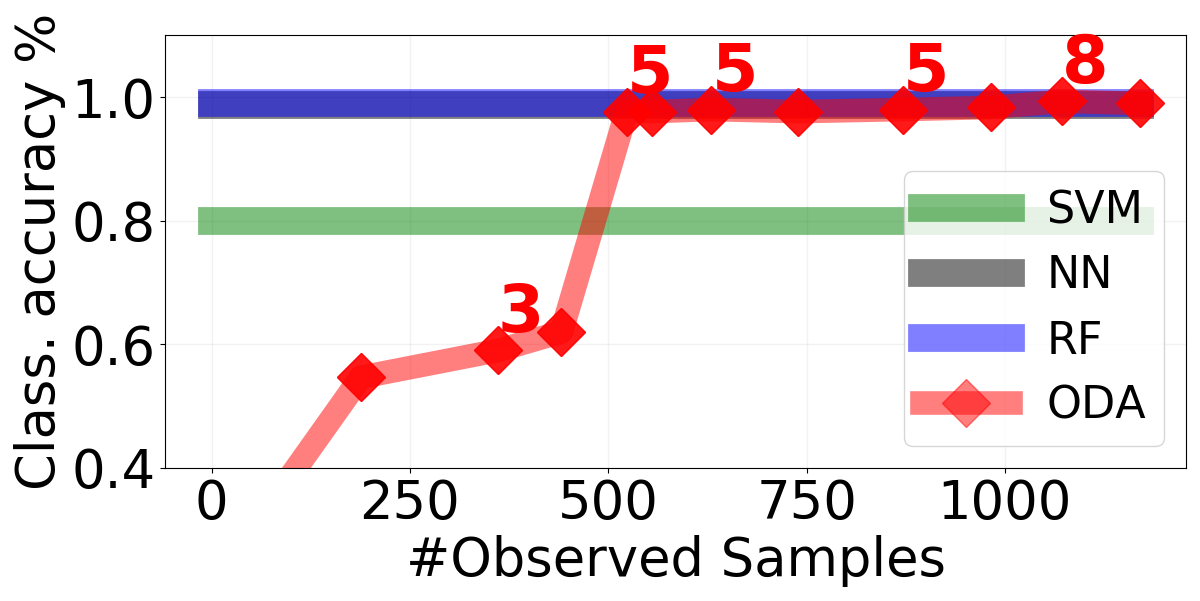}
\caption{Gaussians.}
\label{sfig:classification_blobs}
\end{subfigure}
\hfill
\begin{subfigure}[b]{0.24\textwidth}
\centering
\includegraphics[trim=0 0 0 0,clip,width=\textwidth]{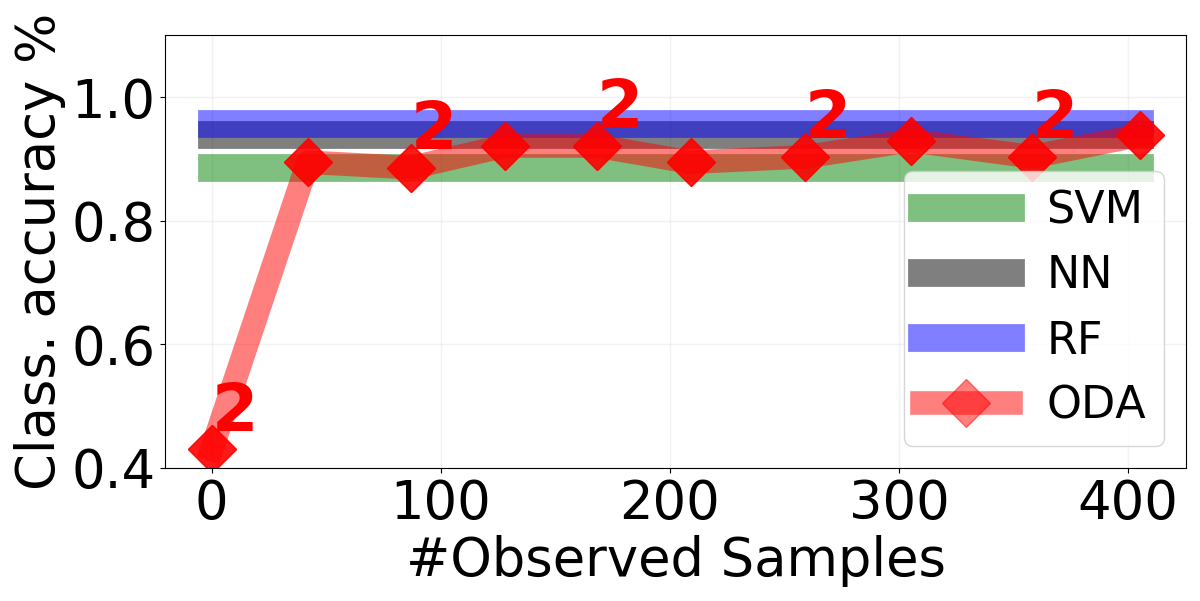}
\caption{WBCD.}
\label{sfig:classification_wbcd}
\end{subfigure}
\hfill
\begin{subfigure}[b]{0.24\textwidth}
\centering
\includegraphics[trim=0 0 0 0,clip,width=\textwidth]{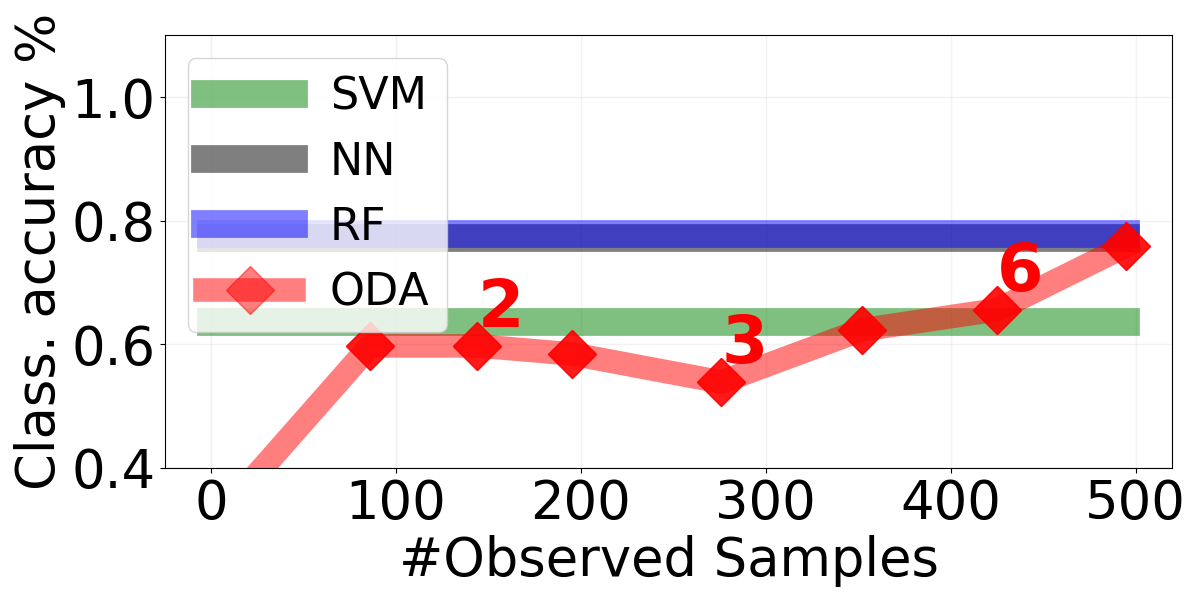}
\caption{PIMA.}
\label{sfig:classification_pima}
\end{subfigure}
\hfill
\begin{subfigure}[b]{0.24\textwidth}
\centering
\includegraphics[trim=0 0 0 0,clip,width=\textwidth]{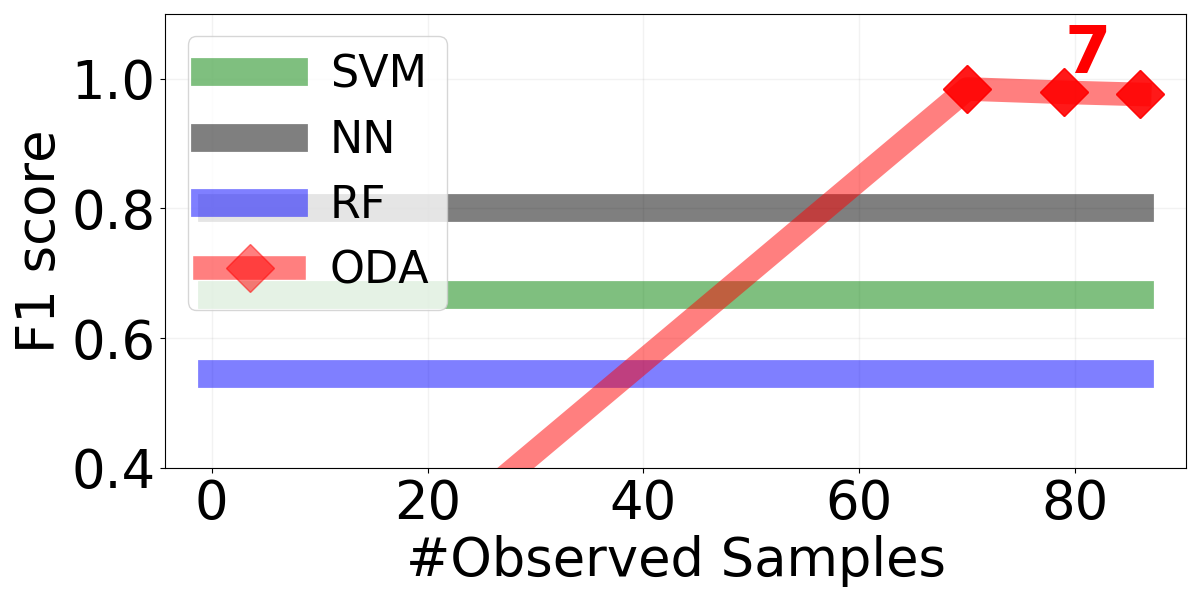}
\caption{Credit Card (F1 score).}
\label{sfig:classification_creditcard}
\end{subfigure}
\caption{Algorithm comparison for classification.}
\label{fig:classification}
\end{figure}

\begin{figure}[h]
\centering
\includegraphics[trim=0 0 0 0,clip,width=0.21\textwidth]{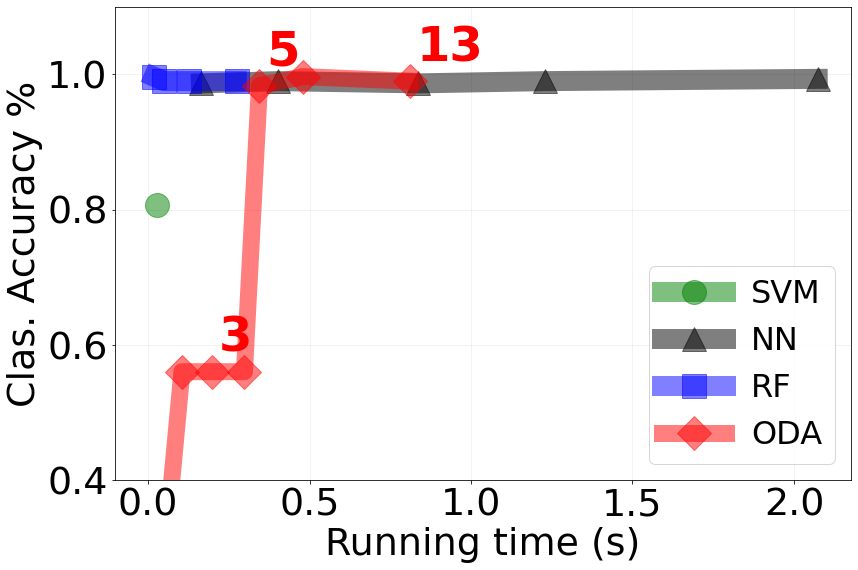}
\includegraphics[trim=0 0 0 0,clip,width=0.23\textwidth]{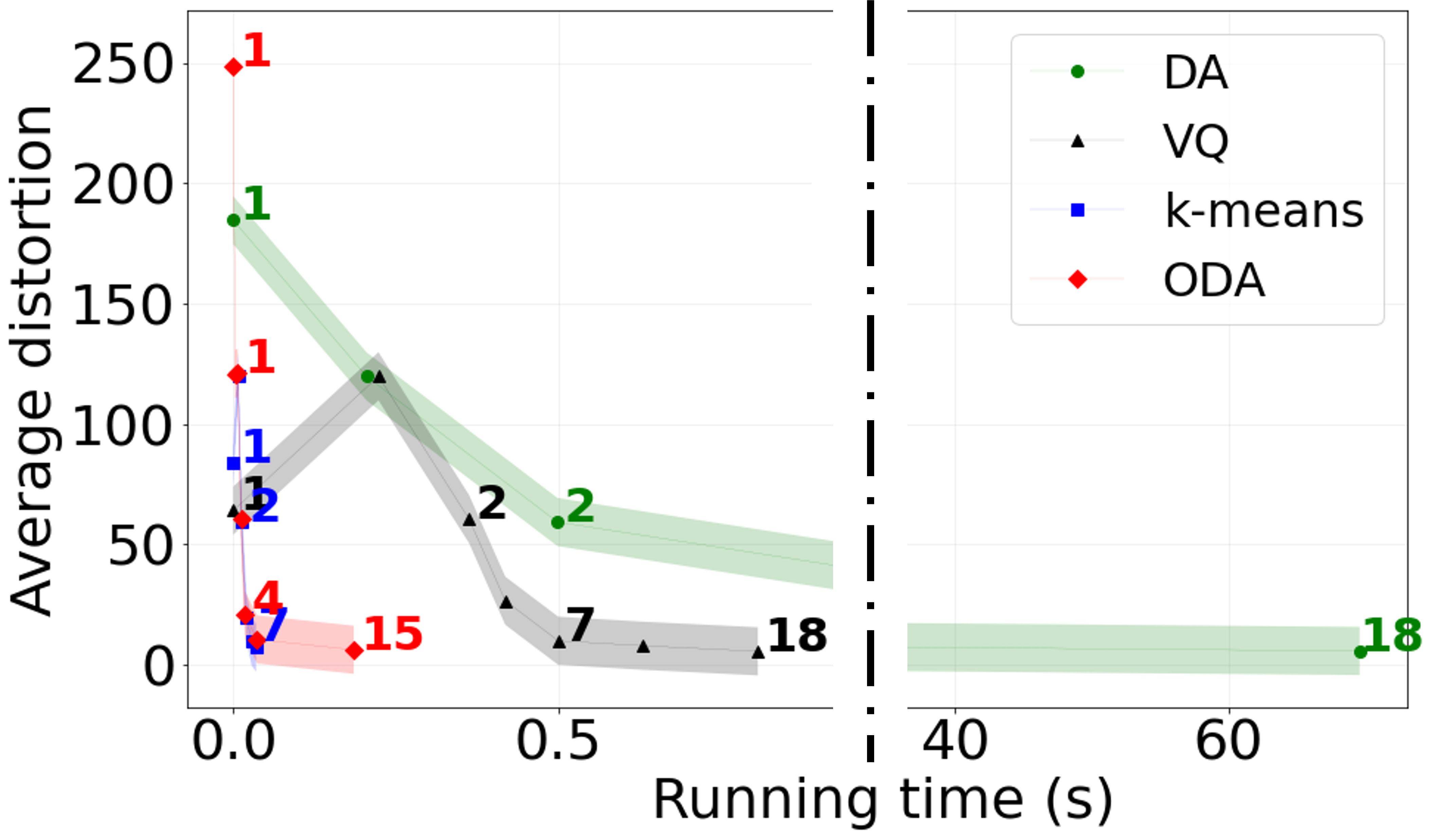}
\caption{Running time of the algorithms in Fig. \ref{sfig:clustering_blobs}, Fig. \ref{sfig:classification_blobs}.}
\label{fig:rtime}
\end{figure}

\begin{table}
\begin{center}
\begin{small}
\begin{sc}
\begin{tabular}{lcccc}
\toprule
Data set & ODA & SVM & NN & RF \\ 
\midrule
Gaussian	
	& 98.9{\tiny $\pm$ 0.0} & 79.5{\tiny $\pm$ 0.0} & 98.6{\tiny $\pm$ 0.0} & 98.7{\tiny $\pm$ 0.0} \\
WBCD    
	& 90.7{\tiny $\pm$ 0.0} & 85.6{\tiny $\pm$ 0.0} & 92.7{\tiny $\pm$ 0.0} & 94.6{\tiny $\pm$ 0.0} \\
Credit {\small (F1)}
	& 95.6{\tiny $\pm$ 0.0} & 69.1{\tiny $\pm$ 0.2} & 58.9{\tiny $\pm$ 0.1} & 62.8{\tiny $\pm$ 0.1} \\
PIMA	
	& 70.5{\tiny $\pm$ 0.0} & 62.9{\tiny $\pm$ 0.0} & 76.3{\tiny $\pm$ 0.0} & 74.4{\tiny $\pm$ 0.0} \\
\bottomrule
\end{tabular}
\end{sc}
\end{small}
\end{center}
\vskip -0.1in
\caption{Classification accuracies in $5$-fold cross-validation.}
\label{tbl:classification}
\end{table}

\textit{Classification.}
For classification, we consider the Gaussian 
(Fig. \ref{sfig:illustration_blobs}), WBCD, PIMA, and Credit Card%
\footnote{$15000$ samples randomly selected.}%
\cite{carcillo2019combining} datasets.
We compare Alg. \ref{alg:ODA}
against an SVM model with a linear kernel \cite{hearst1998support},
a feed-forward fully-connected neural network with a single hidden layer 
of $n_{NN}$ neurons (NN), and the Random Forests (RF) algorithm with 
$t_{RF}$ estimators \cite{breiman2001random}.
These algorithms have been selected to represent
today's standards in simple classification tasks,
i.e., when no sophisticated feature extraction is required.
The SVM classifier represents the class of linear classification models, 
the neural network represents the class of non-linear approximation models, 
and the random forests algorithm represents the class of partition-based
methods with bootstrap aggregating. 
Table \ref{tbl:classification} shows 
the results of a $5$-fold cross validation ($80/20\%$), and
Fig. \ref{fig:classification} illustrates the performance of 
the algorithms during a random test.
The evolution of the complexity of the ODA model is depicted as 
a function of the observed samples and the classification accuracy achieved.
%
%
We use the generalized I divergence (Example \ref{ex:Idiv}) in the WBCD dataset 
and the Euclidean distance in the rest.
ODA (Alg. \ref{alg:ODA}) outperforms the linear SVM classifier,
and can achieve comparable performance with the NN and the
RF algorithms, which are today's standards in classification tasks
where no feature extraction is required.
In the greatly unbalanced Credit Card dataset, 
all algorithms achieved accuracy close to $100\%$,
but the $F1$ scores 
dropped significantly (Fig. \ref{sfig:classification_creditcard}). 
Notably, this was not the case with the ODA algorithm.
This may be due to the generative nature of the algorithm, 
%
and might also be an instance of the robustness expected by 
vector quantization algorithms \cite{saralajew2019robustness}.
Justifying and quantifying this robustness is beyond the scope of this paper.

\textit{Parameters.}
The parameters $n_{NN}\in[10,100]$ and $t_{RF}\in[10,100]$ 
were selected through extensive grid search.
In contrast, the parameters of the ODA algorithm 
for all the experiments were set as follows:
$T_{max}=100\Delta_S d$, $T_{min}=0.001\Delta_S d$,
$K_{max}=100$, $\gamma=0.8$, $\epsilon_c=0.0001\Delta_S d$, 
$\epsilon_n=0.001\Delta_S d$, $\epsilon_r=10^{-7}$, $\delta=0.01\Delta_S d$,
and $\alpha_n= \nicefrac{1}{1+0.9n}$, where
$d$ is the number of dimensions of the input $X\in S\subseteq \mathbb{R}^d$, and 
$\Delta_S$ represents the length of the largest edge of the smallest $d$-orthotope 
that contains $S$.
We stress that no parameter tuning has taken place for the proposed algorithm.

\textit{Limitations.}
Finally, we note that both NN and RF 
outperform Alg. \ref{alg:ODA} in some datasets (Table \ref{tbl:classification}).   
A fine-tuning mechanism, as discussed in Section \ref{sSec:Algorithm},
could alleviate these differences, 
and is currently not used in our experiments.
Regarding the running time of the ODA algorithm, Fig. \ref{fig:rtime}
shows the execution time of the learning algorithms used in  
Fig. \ref{sfig:clustering_blobs} and Fig. \ref{sfig:classification_blobs}.
All experiments were implemented in a personal computer. 
We note that, in contrast to the commercial, and, therefore, optimized versions 
of the $k$-means, SVM, NN, and RF algorithms, 
the algorithmic implementation of the proposed algorithm is not yet optimized, 
and substantial speed-up is expected through appropriate software development.

%

\section{Conclusion}
\label{Sec:Conclusion}

It is understood that the trade-off between 
model complexity and performance in machine learning algorithms
is closely related to 
over-fitting, generalization, and robustness to 
input perturbations and adversarial attacks.
We investigate the properties of learning with progressively growing models,
and propose an online annealing optimization approach
as a learning algorithm
that progressively adjusts its complexity with respect to new observations, 
offering online control over the performance-complexity trade-off.
The proposed algorithm can be viewed as a neural network 
with inherent regularization mechanisms,
the learning rule of which is formulated as an online 
gradient-free stochastic approximation algorithm.
As a prototype-based learning algorithm, 
it offers a progressively growing knowledge base 
that can be interpreted as a memory unit that 
parallels similar concepts form cognitive psychology and neuroscience.
The annealing nature of the algorithm prevents poor local minima, 
offers robustness to initial conditions,
and provides a means 
to progressively increase the complexity of the learning model 
as needed.
%
To our knowledge, this is the first time such a progressive approach 
has been proposed for machine learning applications. 
We believe that these results can lead to new developments
in learning with progressively growing models, especially in 
communication, control, and reinforcement learning applications.

\bibliographystyle{IEEEtran} %
\bibliography{bib_learning.bib}


\vspace{-2.8em}

\begin{IEEEbiography}[{\includegraphics[width=1in,height=1.25in,clip,keepaspectratio]
{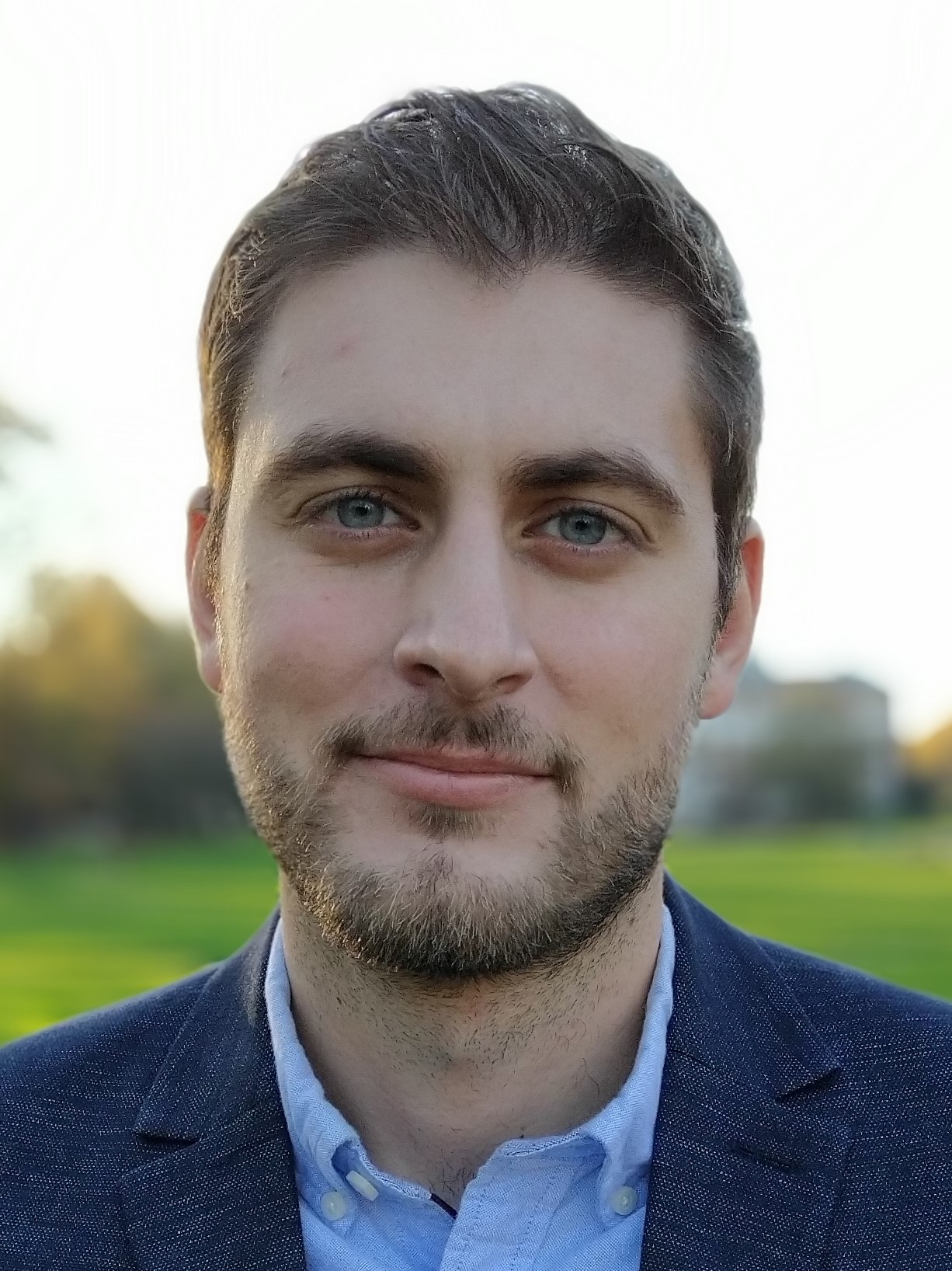}}]{Christos N. Mavridis} (M'20) 
received the Diploma degree in electrical and computer engineering from the National Technical University of Athens, Greece, in 2017,
and the M.S. and  Ph.D. degrees in electrical and computer engineering at the University of Maryland, College Park, MD, USA, in 2021. 
His research interests include learning theory, stochastic optimization, systems and control theory, multi-agent systems, and robotics. 

He has worked as a researcher at the Department of Electrical and Computer Engineering at the University of Maryland, College Park, and as a research intern for the Math and Algorithms Research Group at Nokia Bell Labs, NJ, USA, and the System Sciences Lab at Xerox Palo Alto Research Center (PARC), CA, USA. 

Dr. Mavridis is an IEEE member, and a member of the Institute for Systems Research (ISR) and the Autonomy, Robotics and Cognition (ARC) Lab. He received the Ann G. Wylie Dissertation Fellowship in 2021, and the A. James Clark School of Engineering Distinguished Graduate Fellowship, Outstanding Graduate Research Assistant Award, and Future Faculty Fellowship, in 2017, 2020, and 2021, respectively. He has been a finalist in the Qualcomm Innovation Fellowship US, San Diego, CA, 2018, and he has received the Best Student Paper Award (1st place) in the IEEE International Conference on Intelligent Transportation Systems (ITSC), 2021.
\end{IEEEbiography}

\vspace{-2.8em}

\begin{IEEEbiography}[{\includegraphics[width=1in,height=1.25in,clip,keepaspectratio]
{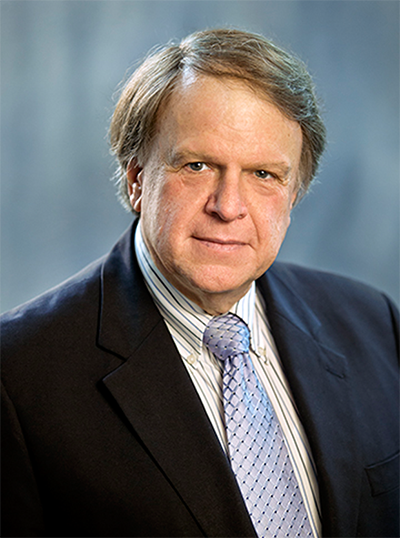}}]{John S. Baras} (LF'13) 
received the Diploma degree in electrical and mechanical engineering from the National Technical University of Athens, Greece, in 1970, and the M.S. and Ph.D. degrees in applied mathematics from Harvard University, Cambridge, MA, USA, in 1971 and 1973, respectively.

He is a Distinguished University Professor and holds the Lockheed Martin Chair in Systems Engineering, with the Department of Electrical and Computer Engineering and the Institute for Systems Research (ISR), at the University of Maryland College Park. From 1985 to 1991, he was the Founding Director of the ISR. Since 1992, he has been the Director of the Maryland Center for Hybrid Networks (HYNET), which he co-founded. His research interests include systems and control, optimization, communication networks, applied mathematics, machine learning, artificial intelligence, signal processing, robotics, computing systems, security, trust, systems biology, healthcare systems, model-based systems engineering.

Dr. Baras is a Fellow of IEEE (Life), SIAM, AAAS, NAI, IFAC, AMS, AIAA, Member of the National Academy of Inventors and a Foreign Member of the Royal Swedish Academy of Engineering Sciences. Major honors include the 1980 George Axelby Award from the IEEE Control Systems Society, the 2006 Leonard Abraham Prize from the IEEE Communications Society, the 2017 IEEE Simon Ramo Medal, the 2017 AACC Richard E. Bellman Control Heritage Award, the 2018 AIAA Aerospace Communications Award. In 2016 he was inducted in the A. J. Clark School of Engineering Innovation Hall of Fame. In 2018 he was awarded a Doctorate Honoris Causa by his alma mater the National Technical University of Athens, Greece.   
\end{IEEEbiography}

\end{document}